\definecolor{color1}{HTML}{ECF4F9}
\definecolor{color2}{HTML}{FFF1E0}
\definecolor{color3}{HTML}{ECF4E9}
\definecolor{GenColor}{HTML}{6D339C}
\definecolor{RepColor}{HTML}{285D9B}
\definecolor{Gray}{gray}{0.9}
\definecolor{LightBlue}{RGB}{236,244,249}
\newcommand{\CC}[1]{\cellcolor{LightBlue}}
\newcommand{\RC}[1]{\rowcolor{LightBlue}}
\newtheorem{theorem}{Theorem}
\algrenewcommand{\algorithmicrequire}{\textbf{Input:}}
\algrenewcommand{\algorithmicensure}{\textbf{Output:}}
\definecolor{iccvblue}{rgb}{0.21,0.49,0.74}
\title{\textcolor{GenColor}{Gen}\textcolor{RepColor}{Hancer}: Imperfect Generative Models are Secretly \\ Strong Vision-Centric Enhancers}
\author{Shijie Ma\textsuperscript{1,2}, 
	  ~Yuying Ge\textsuperscript{1,\Letter},
        ~Teng Wang\textsuperscript{1},
        ~Yuxin Guo\textsuperscript{1,2},
        ~Yixiao Ge\textsuperscript{1},
        ~Ying Shan\textsuperscript{1}\\
	$^1$ARC Lab, Tencent PCG\qquad $^2$Institute of Automation, CAS \\
    \url{https://mashijie1028.github.io/GenHancer}
}
\begin{document}

\twocolumn[{%
\renewcommand\twocolumn[1][]{#1}%
\maketitle
\begin{center}
    \centering
    \captionsetup{type=figure}
    \vspace{-7pt}
    \includegraphics[width=.95\textwidth]{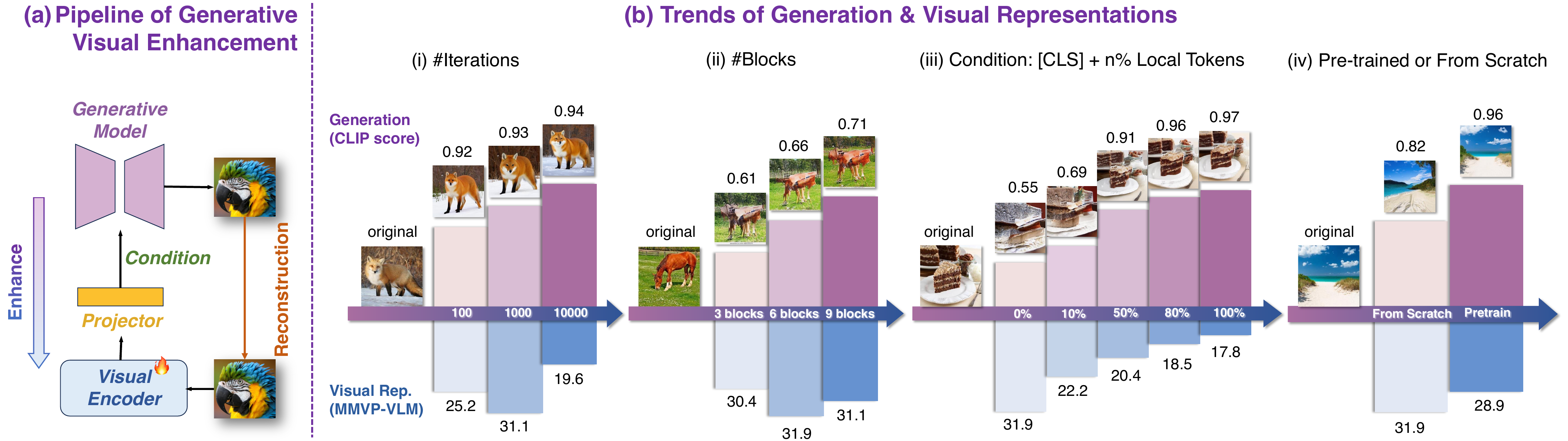}
    \vspace{-7pt}
    \captionof{figure}{\textbf{Perfect \textcolor{GenColor}{generation (reconstruction)} does not always yield desirable \textcolor{RepColor}{visual representations}.} (a) Pipeline of fine-grained visual enhancements, where generative models take visual tokens as conditions and perform reconstruction. (b) Experiments across four dimensions, \ie, training iterations, denoiser size, ratio of local tokens as conditions, and whether to use pre-trained denoisers. We measure \textbf{\textcolor{GenColor}{generation (CLIP score $\uparrow$)}} and \textbf{\textcolor{RepColor}{visual representations (MMVP-VLM $\uparrow$)}} performance. As the results demonstrate, although increasing the number of training iterations, adding more denoiser blocks, using a larger ratio of local tokens as conditions, and employing pre-trained denoisers lead to better generation results, the performance of visual representations does not always improve. Best viewed zoomed in.}
    \label{fig:teaser}
\end{center}
}]

\begin{abstract}
The synergy between generative and discriminative models receives growing attention. While discriminative Contrastive Language-Image Pre-Training (CLIP) excels in high-level semantics, it struggles with perceiving fine-grained visual details. Generally, to enhance representations, generative models take CLIP's visual features as conditions for reconstruction. However, the underlying principle remains underexplored. In this work, we empirically found that \textbf{visually} perfect generations are not always optimal for representation enhancement. The essence lies in effectively extracting fine-grained knowledge from generative models while mitigating irrelevant information. To explore critical factors, we delve into three aspects: (1) Conditioning mechanisms: We found that even a small number of local tokens can drastically reduce the difficulty of reconstruction, leading to collapsed training. We thus conclude that utilizing \textbf{only} global visual tokens as conditions is the most effective strategy. (2) Denoising configurations: We observed that end-to-end training introduces extraneous information. To address this, we propose a two-stage training strategy to prioritize learning useful visual knowledge. Additionally, we demonstrate that lightweight denoisers can yield remarkable improvements. (3) Generation paradigms: We explore both continuous and discrete denoisers with desirable outcomes, validating the versatility of our method. Through our in-depth explorations, we have finally arrived at an effective method, namely GenHancer, which consistently outperforms prior arts on the MMVP-VLM benchmark, e.g., 6.0\% on OpenAICLIP. The enhanced CLIP can be further plugged into multimodal large language models for better vision-centric performance.
All the models and codes are made publicly available.
\end{abstract}    
\section{Introduction}
\label{sec:intro}

Generative and discriminative models have evolved rapidly in recent years~\cite{liu2023visual,xu2024demystifying,tian2025visual,brooks2024video}. Both of them exhibit complementary strengths, where generative models like diffusion models~\cite{yang2023diffusion,ho2020denoising,rombach2022high} and rectified flow~\cite{lipman2023flow,esser2024scaling} capture low-level visual details, while discriminative models like Contrastive Language-Image Pre-Training (CLIP)~\cite{radford2021learning,zhai2023sigmoid} and DINO~\cite{oquab2024dinov} excel in high-level semantics. This complementary nature enables a synergistic relationship between them. Pioneering work~\cite{yu2025representation} has shown that discriminative models can facilitate the training of generative models through feature alignment. Conversely, generative models can also enhance discriminative models by improving their ability to understand fine-grained visual patterns, \eg, orientation, color and quantity. This enhancement is particularly pertinent for models like CLIP, which have inherent visual shortcomings~\cite{tong2024eyes} that could also limit Multimodal Large Language Models (MLLMs)~\cite{liu2024improved,tong2024cambrian} in vision-centric tasks. Recent works~\cite{hudson2024soda,wang2025diffusion,wang2025reconstructive} have attempted to enhance CLIP ViT by using the visual features of ViT~\cite{dosovitskiy2021an} as conditional inputs for generative models. These models perform self-supervised reconstruction to compel the discriminative model to capture fine-grained visual details, as illustrated in Fig.~\ref{fig:teaser} (a). While these approaches demonstrate the potential of enhancing representations through generative models, they often rely on pre-trained heavy denoisers and do not explore the underlying principle.

\begin{figure}[!t]
    \centering
    \includegraphics[width=.95\linewidth]{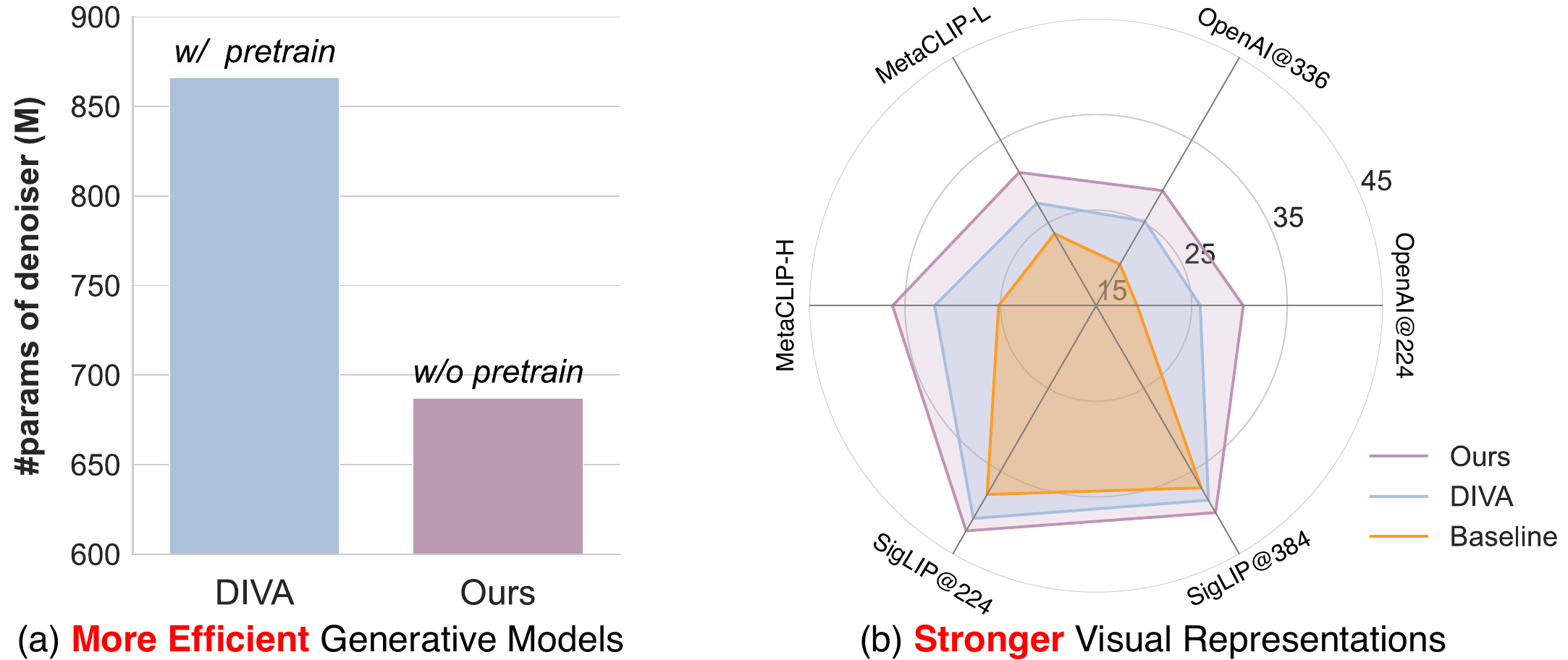}
    \vspace{-7pt}
    \caption{Comparison with prior method~\cite{wang2025diffusion}. (a) We only need a lightweight denoiser, but (b) achieve stronger performance than DIVA~\cite{wang2025diffusion}, which relies on pre-trained heavy generative models.}
    \vspace{-7pt}
    \label{fig:compare-diva}
\end{figure}

To enable generative models to enhance visual representations, a natural question arises: \emph{Do we need a perfect generative model to achieve this enhancement?} To address this question, we conducted preliminary experiments from several dimensions, including \#training iterations, the ratio of local tokens as conditions, the size of denoisers, and whether to use a pre-trained generative model (denoiser), as in Fig.~\ref{fig:teaser} (b). The answer is that perfect generation (reconstruction) does not always yield desirable visual representations. For example, in Fig.~\ref{fig:teaser} (iii), introducing more local tokens as conditions can significantly improve reconstruction, while the visual enhancement will be drastically degraded. In Fig.~\ref{fig:teaser} (iv), although the pre-trained denoiser exhibits better reconstruction, its representations are weaker.

This leads us to further investigate the key points for generative models to effectively enhance visual representations.
We argue that generative models simultaneously contain useful knowledge, like visual patterns and details, as well as irrelevant information, like the gap between CLIP ViT's feature space and generative models' condition space. To effectively enhance representations, our general \emph{philosophy} is that discriminative models should prioritize learning useful knowledge from generative models while circumventing irrelevant information. Furthermore, generative models can be divided into continuous~\cite{lipman2023flow,liu2023flow} and discrete~\cite{esser2021taming} ones, with different denoising objectives, which should also be considered.
Consequently, we conduct in-depth explorations from three key aspects: conditioning mechanisms, denoising configurations, and generation paradigms.

\textbf{Key Point \#1: Which part of the visual information should generative models focus on?}
As in Fig.~\ref{fig:teaser} (a), generative models take visual tokens of discriminative models as conditions. The choice of different tokens significantly impacts the outcomes. In this regard, we find \emph{only} the global token (\ie, class token) could yield desirable visual enhancements. We attribute this to the fact that class token \emph{alone} helps maximize mutual information between visual representations and generative models, while local tokens bring about information leakage and drastically reduce the task’s difficulty, resulting in collapsed learning.

\textbf{Key Point \#2: How to design denoising configurations to transfer useful information for visual representations?}
The structure of the denoiser could determine the enhancement effects. Additionally, before training CLIP, it is essential to mitigate irrelevant information. Therefore, we investigate the influence of different sizes of the denoiser and training stages.
In this paper, we propose GenHancer, a two-stage post-training method for visual enhancements. In the first stage, we pre-train the projector and denoiser while freezing the ViT, learning basic reconstruction abilities and mitigating irrelevant information. In the second stage, we fine-tune CLIP ViT to enhance its fine-grained visual representations. Meanwhile, we empirically found that a lightweight denoiser is sufficient to achieve remarkable results, which is more efficient yet stronger, as in Fig.~\ref{fig:compare-diva}.

\textbf{Key Point \#3: Do two types of denoisers share a common enhancing principle for visual representations?}
For both continuous and discrete denoisers, we present tailor-made designs, including denoiser and conditioning structure. Moreover, we reveal that previous Key Points \#1, \#2 apply to both types, indicating the versatility of our method.

Our contributions are summarized as follows:
\begin{itemize}
    \item We conduct an in-depth study on visual representation enhancements with generative models and make the innovative discovery that perfect reconstruction and pre-trained models are not necessary. This leads us to explore three key aspects: conditioning mechanisms, denoising configurations, and the generation paradigms.
    \item We propose GenHancer, a two-stage post-training method with only lightweight denoisers for visual enhancements, which uses only the class token as the conditional input to perform self-supervised reconstruction. Our method is applicable to both continuous and discrete denoisers.
    \item Comprehensive vision-centric evaluations show that our enhanced CLIP significantly outperforms prior methods that rely on pre-trained heavy denoisers, as in Fig.~\ref{fig:compare-diva}.
\end{itemize}

\section{Related Works}
\label{sec:related-works}

\paragraph{MLLMs and Vision Encoders.}
Currently, MLLMs~\cite{guo2025aligned,lin2025toklip} predominantly employ CLIP~\cite{radford2021learning,tan2024saco} for visual encoding. Tong \etal~\cite{tong2024eyes} identified several failure patterns in CLIP, which hinder the fine-grained visual understanding. To overcome this issue, early efforts~\cite{tong2024eyes,tong2024cambrian,kar2024brave} employed an ensemble of visual experts to combat the visual shortcomings. More recently, ROSS~\cite{wang2025reconstructive} leverages intrinsic visual activations and incorporates a self-supervised visual reconstruction loss during training MLLMs. Complementarily, DIVA~\cite{wang2025diffusion} proposes to enhance CLIP's fine-grained abilities through diffusion feedback. Similar to~\cite{wang2025diffusion}, we independently enhance CLIP's internal representations, which not only strengthen CLIP as a vision-language retriever but also enable the enhanced CLIP to be seamlessly integrated into MLLMs in a \emph{plug-and-play} manner for better fine-grained vision-centric performance.

\vspace{-13pt}
\paragraph{Enhancing Visual Representations with Diffusion Models.}
Early works~\cite{tian2023stablerep,luo2025deem,shipard2023diversity} utilize generative models as data augmenters~\cite{ma2024active,ma2025towards,tan2025diffin}. Another line of works~\cite{wei2023diffusion,hudson2024soda,chen2025deconstructing} leverages self-supervised reconstruction tasks with diffusion models, which helps models grasp visual details and learn fine-grained representations. Similarly, DIVA~\cite{wang2025diffusion} takes CLIP's features as conditional inputs to the diffusion model~\cite{rombach2022high}, addressing its visual shortcomings through reconstruction. In summary, prior arts predominantly rely on diffusion models~\cite{fuest2024diffusion,rombach2022high}, whereas we apply our method to both continuous and discrete generative models.

\vspace{-13pt}
\paragraph{Vision-Centric Benchmarks.}
Canonical evaluations of MLLMs focus on fundamental multimodal Q\&A capabilities across various domains, \eg, general perception and cognition~\cite{fu2023mme}, text and characters~\cite{singh2019towards}, scientific fields~\cite{lu2022learn}, and potential hallucinations~\cite{li2023evaluating,guan2024hallusionbench} in MLLMs. However, these benchmarks could not effectively assess a model's fine-grained~\cite{guo2024crossmae,guo2025aligned} visual perception abilities, such as object color, quantity, orientation, and viewpoint. To solve this issue, Tong \etal~\cite{tong2024eyes} systematically explore the failure modes of CLIP and propose a challenging MMVP benchmark with 9 visual patterns. CV-Bench~\cite{tong2024cambrian} further expands with 2,600 vision-centric VQA questions, covering dimensions like spatial relationships, count, depth, and distance of both 2D and 3D domains. Besides, NaturalBench~\cite{li2024naturalbench} curates natural adversarial samples that are easy for humans but MLLMs struggle with. In this paper, we employ these vision-centric benchmarks to comprehensively evaluate models' fine-grained visual abilities.

\section{Preliminaries of Generative Models}
\label{sec:preliminaries}

In principle, generative models can be divided into continuous and discrete ones. For continuous generative models, we focus on the recently popular rectified flow~\cite{liu2023flow,lipman2023flow}, while discrete generative models are conventionally built upon pre-trained codebooks~\cite{van2017neural,esser2021taming} for discrete modeling.

\vspace{-12pt}
\paragraph{Rectified Flow (RF).}
Most generative models explicitly or implicitly learn a mapping from a basic distribution, \eg, Gaussian distribution $\mathcal{N}(\boldsymbol{0},\boldsymbol{I})$, to a target distribution, typically the real data distribution $p_\text{data}$. The core idea of RF is to learn an Ordinary Differential Equation (ODE) $\text{d}Z_t=\boldsymbol{u}(Z_t,t)\text{d}t$ that follows a straight path from $\pi_0$ to $\pi_1$. Here $\boldsymbol{u}(Z_t,t)$ is a time-conditional velocity field. This could be achieved by solving a least squares regression problem: $\min_{\boldsymbol{u}}\int_{0}^1\mathbb{E}\big[\Vert (X_1-X_0)-\boldsymbol{u}(X_t,t)\Vert^2\big]\text{d}t$,
where $X_t=tX_1+(1-t)X_0$. In practice, we use $\phi$ to parameterize $\boldsymbol{u}$, and $t$ is basically sampled from the uniform distribution $\mathcal{U}(0,1)$. The learning objective of RF is:
\begin{equation}
\begin{aligned}
    \mathcal{L}_\text{RF}&=\mathbb{E}_{t,\boldsymbol{x}_0,\boldsymbol{x}_1}\Big\Vert (\boldsymbol{x}_1-\boldsymbol{x}_0)-\boldsymbol{u}_\phi\big(t\boldsymbol{x}_1+(1-t)\boldsymbol{x}_0,t\big) \Big\Vert_2^2, \\
    & \text{where}\quad t\sim\mathcal{U}(0,1),\ \boldsymbol{x}_0\sim \mathcal{N}(\boldsymbol{0},\boldsymbol{I}),\ \boldsymbol{x}_1\sim p_\text{data}.
    \label{eq:flow}
\end{aligned}
\vspace{-5pt}
\end{equation}

\vspace{-12pt}
\paragraph{Discrete Generative Models.}
For discrete modeling, one should first learn a discrete codebook, where images are represented by their corresponding indices. For example, VQ-GAN~\cite{esser2021taming} employs some schemes~\cite{goodfellow2014generative,zhang2018unreasonable} to learn a discrete codebook of perceptually rich representations. Subsequently, given indices $s_{<i}$ of image $\boldsymbol{x}$, the discrete generative model $\boldsymbol{p}_\phi$ learns to predict the categorical distribution of the next index $s_i$ via the cross-entropy objective:
\begin{equation}
    \mathcal{L}_\text{CE}=\mathbb{E}_{\boldsymbol{x}\sim p_\text{data}}-\log \prod_{i=1}^L \boldsymbol{p}_\phi(s_i|s_{<i}),
    \label{eq:vq-gan}
\vspace{-7pt}
\end{equation}
where $L$ denotes the sequence length of a sample. $\boldsymbol{p}_\phi$ could be any form of model capable of modeling discrete distributions, \eg, PixelCNNs~\cite{van2016conditional} and Transformers~\cite{vaswani2017attention,esser2021taming}.

\vspace{-12pt}
\paragraph{Conditional Generation.}
To achieve conditional generation, one could incorporate the condition $\boldsymbol{c}$, \eg, class labels or text prompts, into the parameterized model in Eq.~\eqref{eq:flow} and Eq.~\eqref{eq:vq-gan} as $\boldsymbol{u}_\phi(\boldsymbol{x}_t,t,\boldsymbol{c})$ and $\boldsymbol{p}_\phi(s_i|s_{<i},\boldsymbol{c})$, respectively.

\section{Method}
\label{sec:method}

\begin{figure*}[!t]
    \centering
    \includegraphics[width=0.95\linewidth]{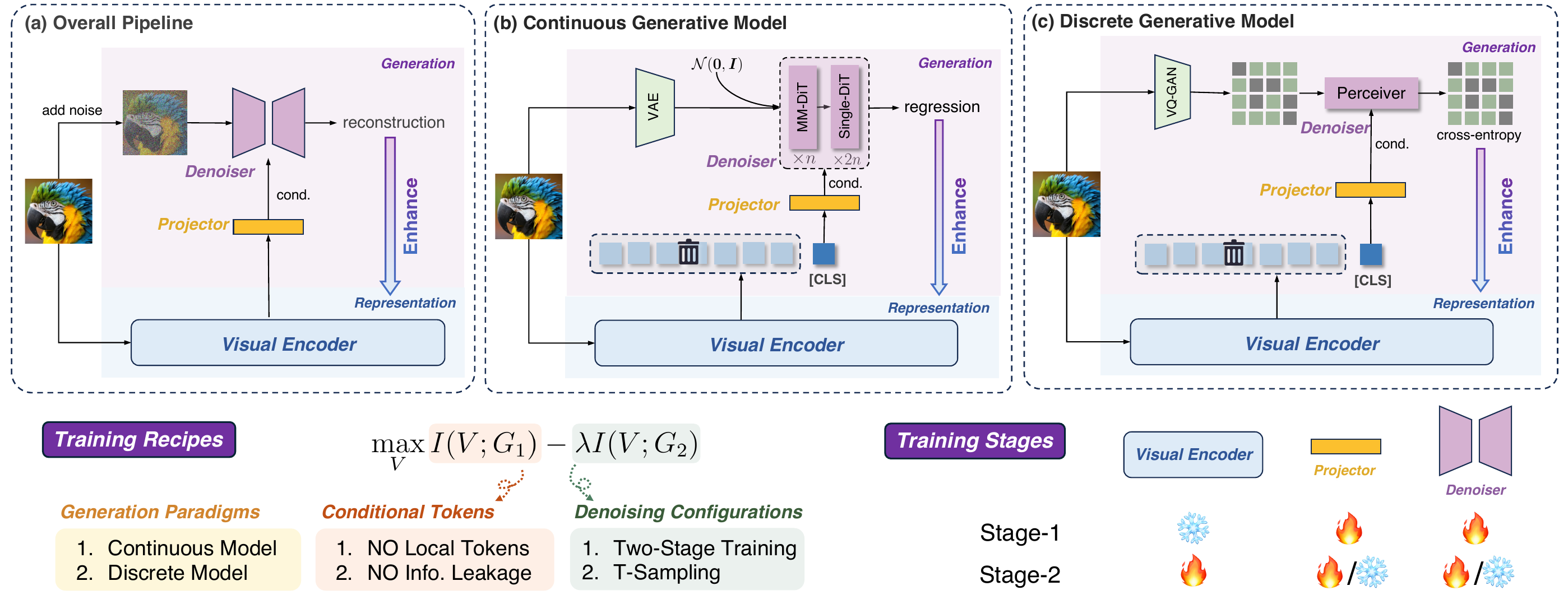}
    \vspace{-7pt}
    \caption{The two-stage post-training framework for visual enhancements. (a) Overall training pipeline. (b) Continuous generative model as the denoiser. We employ a lightweight \texttt{FLUX}-like DiT~\cite{flux2024} (but with fewer blocks) and employ a regression loss of flow matching. (c) Discrete generative model as the denoiser. We choose a lightweight Perceiver~\cite{jaegle2021perceiver} and employ cross-entropy loss to predict masked tokens.}
    \vspace{-5pt}
    \label{fig:method}
\end{figure*}

\subsection{Overview and Formulation}

\paragraph{Overview.}
We propose a two-stage post-training method, namely GenHancer,  to enhance CLIP ViT's fine-grained representations, as in Fig.~\ref{fig:method} (a). To capture key information from generative models, we delve into three aspects:
First, the choice of visual tokens for condition determines the difficulty of the reconstruction task, which is crucial for enhancement (Sec.~\ref{subsec:condition}). Second, we introduce denoising configurations, which enable ViT to capture useful knowledge while mitigating irrelevant information (Sec.~\ref{subsec:train-config}). Third, we present tailored design for both continuous and discrete generative models (Sec.~\ref{subsec:model-arch}), also shown in Fig.~\ref{fig:method} (b), (c).

\vspace{-12pt}
\paragraph{Notations.}
Here, two types of generative models are uniformly represented as $\boldsymbol{g}_\phi$ parameterized by $\phi$. Let $\boldsymbol{v}_\theta$ denote CLIP's visual encoder with parameters $\theta$, whose features are connected to $\boldsymbol{g}_\phi$ as conditions through projector $\boldsymbol{h}_\omega$, \ie, $\boldsymbol{h}_\omega\circ\boldsymbol{v}_\theta(\boldsymbol{x})$. The input sample is $\boldsymbol{x}$, which becomes $\widetilde{\boldsymbol{x}}$ in the denoising space, \eg, VAE~\cite{kingma2013auto} and VQ-GAN~\cite{esser2021taming} for continuous and discrete denoisers, respectively.

\vspace{-12pt}
\paragraph{Repurposing Conditional Generation to Self-supervised Reconstruction.}
Generative models can capture low-level details. To transfer this capability to $\boldsymbol{v}_\theta$, we replace the original condition $\boldsymbol{c}$ with the visual feature $\boldsymbol{v}_\theta(\boldsymbol{x})$.
By reconstructing the visual inputs, $\boldsymbol{v}_\theta$ learns to grasp low-level visual details and is enhanced with fine-grained representations.
In this sense, we transform the original conditional generation into a self-supervised reconstruction task. The learning objectives for continuous $\mathcal{L}_c$ and discrete generative models $\mathcal{L}_d$ can be re-written as Eq.~\eqref{eq:loss-continuous} and Eq.~\eqref{eq:loss-discrete}:
\begin{equation}
\begin{aligned}
    \mathcal{L}_c&=\mathbb{E}_{t,\boldsymbol{x},\widetilde{\boldsymbol{x}}_0,\widetilde{\boldsymbol{x}}_1}\big\Vert (\widetilde{\boldsymbol{x}_1}-\widetilde{\boldsymbol{x}_0})-\boldsymbol{g}_\phi\big(\widetilde{\boldsymbol{x}_t},t,\boldsymbol{h}_\omega\circ\boldsymbol{v}_\theta(\boldsymbol{x})\big) \big\Vert_2^2, \\
    &\text{where}\quad t\sim\mathcal{U}(0,1),\ \widetilde{\boldsymbol{x}_t}=t\widetilde{\boldsymbol{x}_1}+(1-t)\widetilde{\boldsymbol{x}_0},
    \label{eq:loss-continuous}
\end{aligned}
\end{equation}

\vspace{-20pt}
\begin{equation}
    \mathcal{L}_d=\mathbb{E}_{\boldsymbol{x}}-\log \prod_{i=1}^L \boldsymbol{g}_\phi\big(s_i|s_{<i},\boldsymbol{h}_\omega\circ\boldsymbol{v}_\theta(\boldsymbol{x})\big).
    \label{eq:loss-discrete}
\end{equation}
Here, $\boldsymbol{h}_\omega\circ\boldsymbol{v}_\theta(\boldsymbol{x})$ serves as the conditional input of $\boldsymbol{g}_\phi$.

\vspace{-7pt}
\paragraph{Formulation.}
Let $G$ and $V$ denote random variables of features of $\boldsymbol{g}_\phi$ and $\boldsymbol{v}_\theta$. $I(\cdot)$ and $H(\cdot)$ denote mutual information and entropy. Then we have the following theorem:
\begin{theorem}
\label{theorem:entropy}
   When $\boldsymbol{g}_\phi$ is fixed, self-supervised reconstruction is equivalent to maximizing the mutual information $I(V;G)$ between $V$ and $G$. The knowledge learned by $\boldsymbol{v}_\theta$ from $\boldsymbol{g}_\phi$ can be interpreted as the increase in $I(V;G)$.
\end{theorem}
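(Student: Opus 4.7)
The plan is to interpret the reconstruction objective as a negative log-likelihood and apply the standard entropy decomposition. For the discrete denoiser I would read Eq.~(4) as the cross-entropy between the data conditional $p(\widetilde{X}\mid V)$ and the model conditional $p_\phi(\widetilde{X}\mid V)$ induced by the fixed generator, giving
\begin{equation*}
\mathcal{L}_d \;=\; H(\widetilde{X}\mid V)\;+\;\mathbb{E}_{V}\bigl[D_{\mathrm{KL}}\bigl(p(\widetilde{X}\mid V)\,\Vert\,p_\phi(\widetilde{X}\mid V)\bigr)\bigr].
\end{equation*}
For the continuous denoiser I would invoke the fact that the flow-matching loss of Eq.~(3) upper-bounds (up to an additive constant) the negative log-likelihood of $\widetilde{X}$ under the probability-flow ODE driven by $\boldsymbol{g}_\phi$, yielding the same decomposition up to $\theta$-independent offsets. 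Since $\boldsymbol{g}_\phi$ is frozen the KL term is $\theta$-independent, and since $H(\widetilde{X})$ is a property of the data it is $\theta$-independent too; therefore minimizing $\mathcal{L}$ over $\theta$ is equivalent to maximizing $I(\widetilde{X};V) = H(\widetilde{X}) - H(\widetilde{X}\mid V)$.

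Next I would bridge from $I(\widetilde{X};V)$ to $I(V;G)$ by identifying $G$, the random variable of features of $\boldsymbol{g}_\phi$, with (a sufficient statistic of) the generator's conditional target. Writing $G = \boldsymbol{g}_\phi(V,Z)$ with $Z$ the denoising noise, the data-processing inequality gives $I(V;G)\leq I(V;\widetilde{X})$, with equality when $G$ is a sufficient statistic of $\widetilde{X}$ relative to $V$; this is the operative idealization, since the whole point of $\boldsymbol{g}_\phi$ is to faithfully encode $\widetilde{X}$ given its condition. Under this identification, minimizing the reconstruction loss is equivalent to maximizing $I(V;G)$, which is the first claim. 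For the second claim, observe that gradient descent on $\mathcal{L}$ monotonically decreases it and hence, by the chain above, monotonically increases $I(V;G)$; the net gain $\Delta I(V;G)$ between the pre- and post-training encoder is precisely the fine-grained knowledge that $\boldsymbol{v}_\theta$ has absorbed from the fixed generative prior.

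The main obstacle is the continuous case, since the rectified-flow loss in Eq.~(3) is not literally an NLL; I would need to either cite or re-derive the known equivalence between flow-matching objectives and log-likelihoods of the induced ODE through the instantaneous change-of-variables formula, so that the decomposition above can be read off up to a $\theta$-independent constant. A secondary subtlety is the sufficiency assumption coupling $G$ and $\widetilde{X}$: the cleanest resolution is to \emph{define} $G$ as the full internal state of $\boldsymbol{g}_\phi$ conditioned on $V$, which renders sufficiency essentially tautological, or alternatively to state the theorem as an idealization that holds when $\boldsymbol{g}_\phi$ is a well-trained generative prior. Once both caveats are made explicit, the argument collapses to the three-link chain $\mathcal{L}\;\longleftrightarrow\;H(\widetilde{X}\mid V)\;\longleftrightarrow\;-I(V;\widetilde{X})\;\longleftrightarrow\;-I(V;G)$.
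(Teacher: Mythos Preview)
Your route is substantially more elaborate than the paper's. The paper's proof is a two-line heuristic: write $I(V;G)=H(G)-H(G\mid V)$, observe that $H(G)$ is constant when $\boldsymbol{g}_\phi$ is frozen, and assert that reconstruction training makes $V$ ``approximate the distribution of $G$'', so $H(G\mid V)$ decreases. There is no intermediate variable $\widetilde{X}$, no data-processing bridge, and no appeal to the NLL/flow-matching correspondence; the paper treats the claim as a guiding intuition rather than as a formal equivalence.

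Your attempt to make the statement rigorous contains a real error. The decomposition $\mathcal{L}_d=H(\widetilde{X}\mid V)+\mathbb{E}_V\,D_{\mathrm{KL}}\bigl(p(\widetilde{X}\mid V)\,\Vert\,p_\phi(\widetilde{X}\mid V)\bigr)$ is fine, but the KL term is \emph{not} $\theta$-independent merely because $\phi$ is frozen: the true posterior $p(\widetilde{X}\mid V{=}v)$ depends on which inputs $\boldsymbol{v}_\theta$ maps to $v$, so it changes whenever $\theta$ changes, and so does the outer expectation over $V$. What your decomposition actually yields is the Barber--Agakov variational bound $I(V;\widetilde{X})\ge H(\widetilde{X})-\mathcal{L}_d$, so minimizing the loss maximizes a \emph{lower bound} on mutual information, not the mutual information itself; equality would require $p_\phi$ to match the true conditional for every encoder along the optimization path, which is strictly stronger than ``$\boldsymbol{g}_\phi$ fixed''. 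This gap then propagates into the three-link chain you summarize at the end. The cleanest repair is to restate the result as an inequality throughout (loss minimization tightens a variational lower bound on $I(V;G)$), or to adopt the paper's looser framing and drop the equality claims you rely on.
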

\begin{proof}
    The mutual information could be written as: $I(V;G)=H(G)-H(G|V)$. Through reconstruction in Eq.~\eqref{eq:loss-continuous} or Eq.~\eqref{eq:loss-discrete}, by conditioning $G$ on $V$, $V$ is trained to approximate the distribution of $G$. Consequently, $H(G|V)$ decreases during training. While $H(G)$ is fixed, the decrease in $H(G|V)$ leads to the increase in $I(V;G)$.
    \vspace{-5pt}
\end{proof}
From the results in Fig.~\ref{fig:teaser} (b)(i), the reconstruction improves as training progresses, which corresponds to an increasing $I(V;G)$. However, visual representations might decrease. In light of this, for the enhancement of visual representations, the knowledge in $G$ can be decomposed into useful knowledge $G_1$ (\eg, basic semantics, visual patterns) and irrelevant information $G_2$ like the gap between feature space of $\boldsymbol{v}_\theta$ and condition space of $\boldsymbol{g}_\phi$. In this regard, to effectively enhance visual representations, \textbf{our underlying \emph{philosophy} is: The visual encoder should learn to capture useful knowledge from generative models as much as possible, \ie, $\max I(V;G_1)$, while avoiding irrelevant information, \ie, $\min I(V;G_2)$.} This equals to applying regularization on $V$ to prevent overfitting to $G_2$:

\begin{equation}
    \max_V I(V;G_1)-\lambda I(V;G_2) \Rightarrow \max_V I(V;G_1)+\lambda d(V;V_0),
    \label{eq:philosophy}
\end{equation}
$V_0$ is the initial visual model and $d(\cdot)$ is a distance metric.

\subsection{Conditional Visual Tokens}
\label{subsec:condition}

The choice of conditional visual tokens is crucial for visual enhancement. If too many tokens are fed to the generative model, the reconstruction becomes excessively easy. The reason is that local tokens directly correspond to image areas with information leakage. In this case, $I(V;G_1)$ in Eq.~\eqref{eq:philosophy} becomes small and $\boldsymbol{v}_\theta$ fails to grasp useful information from $\boldsymbol{g}_\phi$. To ensure a remarkable $I(V;G_1)$, we argue that the number of local tokens should be carefully controlled. Our experiments show that even a small number of local tokens, though achieving good reconstruction quality, can still cause marginal visual enhancement, as in Fig.~\ref{fig:teaser} (iii).
As a result, we propose that the visual condition features should exclusively comprise \emph{only} the class token \texttt{[CLS]}. This strategy applies to both continuous and discrete models, as validated in Fig.~\ref{fig:ratio-local} of Sec.~\ref{subsec:ablations}.

\subsection{Denoising Configurations}
\label{subsec:train-config}

To effectively enhance visual representations, we aim to maximize $I(V;G_1)$ while suppressing $I(V;G_2)$ in Eq.~\eqref{eq:philosophy}. In this regard, our explorations are three-fold: training stages, timestamp sampling of the continuous denoiser, and the update strategy for $\boldsymbol{v}_\theta$.

\vspace{-12pt}
\paragraph{Two Stage Training.}
An important source of $G_2$ is the gap between the feature space of $\boldsymbol{v}_\theta$ and the conditions of $\boldsymbol{g}_\phi$, which is irrelevant to representation learning and could degrade the performance. Furthermore, since $\boldsymbol{g}_\phi$ is lightweight and randomly initialized, it could introduce potential noise to $\boldsymbol{v}_\theta$ at the beginning. Consequently, we propose a two-stage training pipeline. At Stage-1, we train the denoiser $\boldsymbol{g}_\phi$ and the projector $\boldsymbol{h}_\omega$ while freezing $\boldsymbol{v}_\theta$, in which $\boldsymbol{g}_\phi$ acquires basic generative capabilities for visual enhancements and $\boldsymbol{h}_\omega$ learns to bridge the space gap, thereby reducing $I(V;G_2)$. In Stage-2, we focus on enlarging $I(V;G_1)$ and train $\boldsymbol{v}_\theta$ to improve fine-grained representations. Moreover, we empirically found that as long as Stage-1 is performed sufficiently, the impact of whether the denoiser and projector are trained in Stage-2 is negligible.

\vspace{-12pt}
\paragraph{Low Rank Adaption (LoRA) of $\boldsymbol{v}_\theta$.}
The pre-trained visual encoder $\boldsymbol{v}_\theta$ possesses strong global semantics, \ie, $V_0$, which should be maintained when incorporating fine-grained perception. To prevent $\boldsymbol{v}_\theta$ from overfitting during reconstruction, we update $\boldsymbol{v}_\theta$ using LoRA~\cite{hu2022lora}, which implicitly constrains $d(V,V_0)$ in Eq.~\eqref{eq:philosophy}.

\vspace{-12pt}
\paragraph{Timestamp Sampling.}
For continuous models like RF, timestamp sampling is of vital importance. Conventionally, RF~\cite{liu2023flow} is trained to predict velocity across timestamps uniformly in $[0,1]$. Considering $\boldsymbol{x}_t=t\boldsymbol{x}_1+(1-t)\boldsymbol{x}_0$, prior works~\cite{esser2024scaling} uncover that the velocity target at intermediate timestamps, \ie, $t\approx 0.5$, is more challenging. In our case, sampling intermediate timestamps more frequently could increase the difficulty of the reconstruction task, thus effectively amplifying $I(V;G_1)$ and allowing the visual encoder $\boldsymbol{v}_\theta$ to effectively acquire useful fine-grained knowledge from $G_1$.
In this regard, we propose \emph{scaled} Logit-Normal sampling for timestamps, as shown below:
\begin{equation}
    t=\texttt{sigmoid}(s\cdot\varepsilon),\quad \text{where}\ \ \varepsilon\sim\mathcal{N}(0,1).
    \label{eq:t-sampling}
\end{equation}
Here, $\varepsilon$ is sampled from the normal distribution, $\texttt{sigmoid}(x)=\frac{1}{1+\exp(-x)}$, and $s>0$ is the scale hyperparameter that controls the extent to which sampling is focused on the intermediate timestamps. Smaller $s$ results in more frequent sampling around 0.5. The diagrams of distributions in various $s$ are illustrated in the Appendix.

\subsection{Generation Paradigms}
\label{subsec:model-arch}

For both types of generative models, we need to design architectures for denoisers and implementation of the conditioning mechanism. Notably, our denoiser is lightweight and randomly initialized, without pre-trained weights of heavy denoisers like Stable Diffusion~\cite{rombach2022high} in~\cite{wang2025diffusion}.

\vspace{-12pt}
\paragraph{Continuous Generative Models.}
We choose RF as the continuous denoiser, which is modeled in the latent space of pre-trained VAE~\cite{kingma2013auto}. The structure is inherited from \texttt{FLUX.1-dev}~\cite{flux2024}, consisting of $n\times$ Multimodal Diffusion Transformer (MM-DiT)~\cite{peebles2023scalable,esser2024scaling} blocks and $2n\times$ single-stream DiT (Single-DiT) blocks, as shown in Fig.~\ref{fig:method} (b). By default, we set $n=2$, which is very efficient with $\sim 1/10$ parameters of the original \texttt{FLUX.1-dev} denoiser. Similar to DiT~\cite{peebles2023scalable,esser2024scaling}, the condition of visual tokens (\texttt{[CLS]} of $\boldsymbol{v}_\theta$) is introduced through the modulation mechanism via adaptive layernorm~\cite{peebles2023scalable,ba2016layer}. The learning objective is the regression of flow matching in Eq.~\eqref{eq:loss-continuous}.

\vspace{-12pt}
\paragraph{Discrete Generative Models.}
Here, we choose Perceiver~\cite{jaegle2021perceiver} as the discrete denoiser, building upon off-the-shelf VQ-GAN's codebook~\cite{esser2021taming}. We first mask a certain proportion of input tokens. The condition of visual features is introduced via a cross-attention module, as depicted in Fig.~\ref{fig:method} (c). Specifically, we set the query as the unmasked tokens $s_{<i}$, while the key and value are the concatenation of the unmasked tokens and \texttt{[CLS]} of $\boldsymbol{v}_\theta$. They are collectively fed to the Perceiver with cross-entropy loss to predict the masked token indices $s_i$, as in Eq.~\eqref{eq:loss-discrete}.

\section{Experiments}
\label{sec:experiments}

\subsection{Experimental Setup}

\paragraph{Implementation Details.}
For continuous generative models, we choose RF, whose structure is similar to \texttt{FLUX.1-dev}~\cite{flux2024}, but with only 2 MM-DiT and 4 Single-DiT blocks ($\sim 10\%$ of the parameters). The discrete denoiser is parameterized by a 6-layer Perceiver to predict the masked tokens indexed VQ-GAN's codebook~\cite{esser2021taming}. Similar to~\cite{xie2025showo}, the mask ratio is randomly sampled from 50\% to 90\%. For both generative models, we only take the \texttt{[CLS]} token of CLIP ViT as the conditional input while dropping other local tokens to prevent information leakage. We choose the scale factor in Eq.~\eqref{eq:t-sampling} as 1 by default.

\begin{table*}[!t]
\setlength\tabcolsep{4pt}
\centering
\renewcommand{\arraystretch}{0.9}
\caption{Performance of various CLIP backbones in MMVP-VLM benchmark. Here, we report our results using the continuous denoiser. The enhanced CLIP consistently outperforms prior methods across various visual patterns. The visual patterns are symbolized as: \textbf{\faCompass}: Orientation and Direction, \textbf{\faSearch}: Presence of Specific Features, \textbf{\faSync}: State and Condition, \textbf{\faSortNumericUp}: Quantity and Count, \textbf{\faMapPin}: Positional and Relational Context, \textbf{\faPalette}: Color and Appearance, \textbf{\faCogs}: Structural and Physical Characteristics, \textbf{\faFont}: Texts, \textbf{\faCamera}: Viewpoint and Perspective.}
\vspace{-7pt}
\label{tab:main-mmvp-clip}
\resizebox{.9\linewidth}{!}{
\begin{tabular}{@{}lccl|ccccccccc|c@{}}
\toprule
CLIP Backbone & \#Params (M) & Resolution & Method & \faCompass & \faSearch & \faSync & \faSortNumericUp & \faMapPin & \faPalette & \faCogs & \faFont & \faCamera & Average \\ \midrule
\multirow{3}{*}{OpenAI ViT-L-14} & \multirow{3}{*}{427.6} & \multirow{3}{*}{224$^2$} & Original & 13.3 & 13.3 & 20.0 & 20.0 & 13.3 & 53.3 & 20.0 & 6.7 & 13.3 & 19.3 \\
 &  &  & + DIVA & 13.3 & 20.0 & 40.0 & 6.7 & 20.0 & 53.3 & 46.7 & 20.0 & 13.3 & 25.9 \\
 &  &  & \CC{30}+ Ours & \CC{30}13.3 & \CC{30}33.3 & \CC{30}33.3 & \CC{30}20.0 & \CC{30}6.7 & \CC{30}73.3 & \CC{30}46.7 & \CC{30}20.0 & \CC{30}40.0 & \CC{30}\textbf{31.9} \textbf{\small{\color{Green}(+6.0)}} \\ \midrule
\multirow{3}{*}{OpenAI ViT-L-14} & \multirow{3}{*}{427.9} & \multirow{3}{*}{336$^2$} & Original & 0.0 & 20.0 & 40.0 & 20.0 & 6.7 & 20.0 & 33.3 & 6.7 & 33.3 & 20.0 \\
 &  &  & + DIVA & 26.7 & 20.0 & 33.3 & 13.3 & 13.3 & 46.7 & 26.7 & 6.7 & 40.0 & 25.2 \\
 &  &  & \CC{30}+ Ours & \CC{30}6.7 & \CC{30}20.0 & \CC{30}33.3 & \CC{30}20.0 & \CC{30}6.7 & \CC{30}73.3 & \CC{30}53.3 & \CC{30}26.7 & \CC{30}26.7 & \CC{30}\textbf{29.6} \textbf{\small{\color{Green}(+4.4)}} \\ \midrule \midrule
\multirow{3}{*}{MetaCLIP ViT-L-14} & \multirow{3}{*}{427.6} & \multirow{3}{*}{224$^2$} & Original & 13.3 & 6.7 & 66.7 & 6.7 & 33.3 & 46.7 & 20.0 & 6.7 & 13.3 & 23.7 \\
 &  &  & + DIVA & 6.7 & 6.7 & 60.0 & 0.0 & 26.7 & 66.7 & 20.0 & 20.0 & 40.0 & 27.4 \\
 &  &  & \CC{30}+ Ours & \CC{30}13.3 & \CC{30}20.0 & \CC{30}53.3 & \CC{30}13.3 & \CC{30}26.7 & \CC{30}80.0 & \CC{30}33.3 & \CC{30}13.3 & \CC{30}33.3 & \CC{30}\textbf{31.9} \textbf{\small{\color{Green}(+4.5)}} \\ \midrule
\multirow{3}{*}{MetaCLIP ViT-H-14} & \multirow{3}{*}{986.1} & \multirow{3}{*}{224$^2$} & Original & 6.7 & 13.3 & 60.0 & 13.3 & 6.7 & 53.3 & 26.7 & 13.3 & 33.3 & 25.2 \\
 &  &  & + DIVA & 13.3 & 20.0 & 53.3 & 33.3 & 13.3 & 66.7 & 33.3 & 13.3 & 40.0 & 31.9 \\
 &  &  & \CC{30}+ Ours & \CC{30}20.0 & \CC{30}20.0 & \CC{30}66.7 & \CC{30}26.7 & \CC{30}26.7 & \CC{30}66.7 & \CC{30}33.3 & \CC{30}20.0 & \CC{30}53.3 & \CC{30}\textbf{37.0} \textbf{\small{\color{Green}(+5.1)}} \\ \midrule \midrule
\multirow{3}{*}{SigLIP ViT-SO-14} & \multirow{3}{*}{877.4} & \multirow{3}{*}{224$^2$} & Original & 26.7 & 20.0 & 53.3 & 40.0 & 20.0 & 66.7 & 40.0 & 20.0 & 53.3 & 37.8 \\
 &  &  & + DIVA & 13.3 & 26.7 & 60.0 & 46.7 & 13.3 & 73.3 & 53.3 & 26.7 & 53.3 & 40.7 \\
 &  &  & \CC{30}+ Ours & \CC{30}20.0 & \CC{30}20.0 & \CC{30}66.7 & \CC{30}60.0 & \CC{30}20.0 & \CC{30}86.7 & \CC{30}40.0 & \CC{30}13.0 & \CC{30}53.3 & \CC{30}\textbf{42.2} \textbf{\small{\color{Green}(+1.5)}} \\ \midrule
\multirow{3}{*}{SigLIP ViT-SO-14} & \multirow{3}{*}{878.0} & \multirow{3}{*}{384$^2$} & Original & 20.0 & 26.7 & 60.0 & 33.3 & 13.3 & 66.7 & 33.3 & 26.7 & 53.3 & 37.0 \\
 &  &  & + DIVA & 26.7 & 33.3 & 53.3 & 26.7 & 13.3 & 80.0 & 40.0 & 26.7 & 46.7 & 38.5 \\
 &  &  & \CC{30}+ Ours & \CC{30}26.7 & \CC{30}20.0 & \CC{30}66.7 & \CC{30}33.3 & \CC{30}13.3 & \CC{30}86.7 & \CC{30}40.0 & \CC{30}26.7 & \CC{30}46.7 & \CC{30}\textbf{40.0} \textbf{\small{\color{Green}(+1.5)}} \\ \bottomrule
\end{tabular}
}
\vspace{-7pt}
\end{table*}

\vspace{-13pt}
\paragraph{Training Details.}
Our training process consists of two stages, each involving one epoch on the CC3M~\cite{sharma2018conceptual} dataset. We choose AdamW as the optimizer, with a learning rate of 1e-4 and 1e-5 for Stage-1 and Stage-2, respectively. At Stage-2, we optimize the visual encoder using LoRA with a rank of 16. We employ a global batch size of 256.

\vspace{-13pt}
\paragraph{Comparative Baseline.}
Similar to~\cite{wang2025diffusion}, our method GenHancer independently enhances CLIP via post-tuning. When equipped with our enhanced CLIP and trained with original recipes, MLLMs could perform better on vision-centric benchmarks. In this regard, GenHancer could be viewed as a \emph{plug-and-play} vision-enhancement method for MLLMs. We primarily compare with DIVA~\cite{wang2025diffusion}.

\vspace{-13pt}
\paragraph{Evaluation Protocol.}
Following~\cite{wang2025diffusion}, we perform visual enhancements on six CLIP backbones, including OpenAICLIP ViT-L @224/@336~\cite{radford2021learning}, MetaCLIP@224 ViT-L/H~\cite{xu2024demystifying} and SigLIP-SO-14 @224/@384~\cite{zhai2023sigmoid}. We use MMVP-VLM~\cite{tong2024eyes} to evaluate fine-grained perception abilities. Subsequently, we follow the official training recipes of LLaVA-1.5~\cite{liu2024improved} to train MLLMs with our enhanced CLIP ViT. The resulting MLLMs are comprehensively evaluated on vision-centric benchmarks like MMVP-MLLM~\cite{tong2024eyes}, CV-Bench~\cite{tong2024cambrian} and NaturalBench~\cite{li2024naturalbench}, as well as multimodal understanding benchmarks, including POPE~\cite{li2023evaluating} ScienceQA~\cite{lu2022learn} and HallusionBench~\cite{guan2024hallusionbench}.

\subsection{Comparative Results}

\paragraph{Our method significantly enhances CLIP's fine-grained visual perception abilities.}
We evaluate CLIP models on the challenging MMVP-VLM benchmark~\cite{tong2024eyes}, which contains 9 fine-grained visual patterns for a comprehensive vision-centric evaluation. As in Table~\ref{tab:main-mmvp-clip}, our method with only a lightweight denoiser, surpasses the previous method~\cite{wang2025diffusion} that employed a heavy pre-trained denoiser across multiple CLIP backbones, with variations in resolution and parameters. For example, our method outperforms DIVA by 6.0\% and 4.5\% on OpenAICLIP and MetaCLIP, respectively.
Besides, CLIP's visual shortcomings are effectively addressed after post-training, \eg, we improved MetaCLIP's color perception (\faPalette) from 46.7\% to 80.0\%, and enhanced its viewpoint understanding (\faCamera) by 20\%.

\begin{figure}[!t]
    \centering
    \includegraphics[width=.95\linewidth]{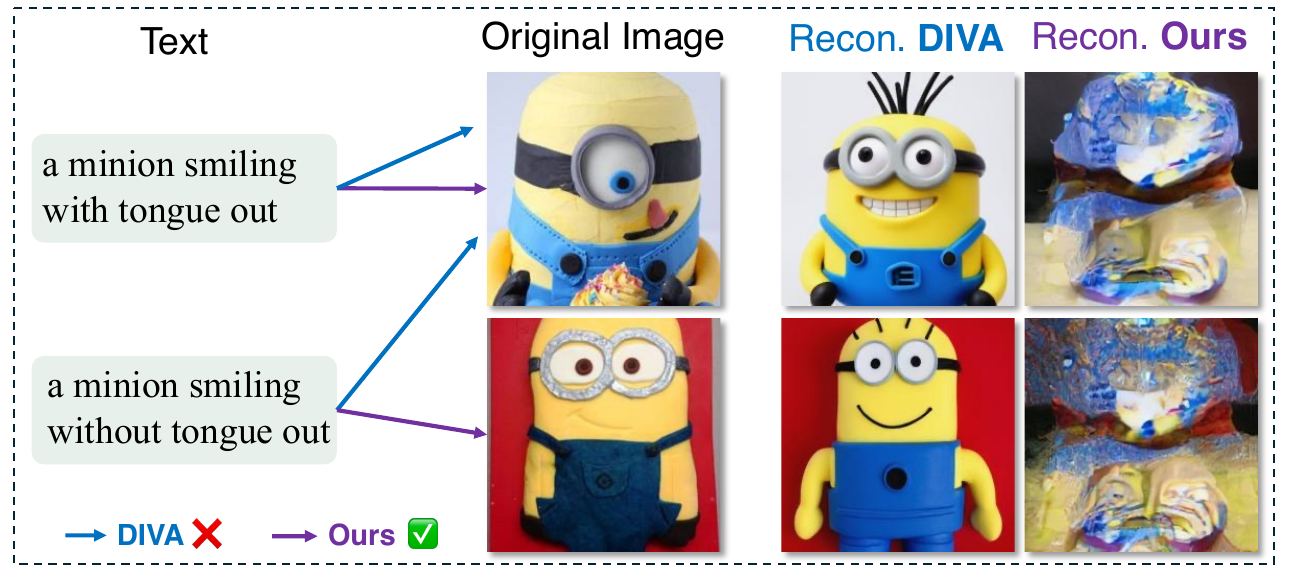}
    \vspace{-7pt}
    \caption{Qualitative results. Although DIVA achieves better reconstructions of input images, it fails to perceive fine-grained visual details between `tongue out' and `without tongue out'.}
    \vspace{-10pt}
    \label{fig:qualitative}
\end{figure}

\vspace{-12pt}
\paragraph{Qualitative Evaluations.}
We present two cases in Fig.~\ref{fig:qualitative}. Although DIVA achieves better reconstructions, our method correctly retrieves images for given texts, while DIVA fails. This further emphasizes that better reconstruction does not necessarily lead to better representations.

\begin{table*}[!t]
\setlength\tabcolsep{3pt}
\centering
\renewcommand{\arraystretch}{0.9}
\caption{Comprehensive evaluation of MLLMs (LLaVA-1.5~\cite{liu2024improved}), including vision-centric and conventional MLLM benchmarks. $^\dagger$ We use official DIVA CLIP checkpoints~\cite{wang2025diffusion} to reproduce the results. $^\ddagger$ Similar to~\cite{li2024seed}, we select the choice with the highest likelihood as MLLM's prediction. Hallusion: HallusionBench~\cite{guan2024hallusionbench}. SciQA: ScienceQA~\cite{lu2022learn}. \textbf{Bold} and \underline{underline} indicate the best and the second best.}
\vspace{-10pt}
\label{tab:mllm-benchmark}
\resizebox{.95\linewidth}{!}{
\begin{tabular}{@{}llccccccccccccc@{}}
\toprule
\multirow{3}{*}{LLM} & \multirow{3}{*}{CLIP} & \multicolumn{8}{c}{Vision-Centric Benchmarks} & \multicolumn{5}{c}{Conventional MLLM Benchmarks} \\ \cmidrule(l){3-10} \cmidrule(l){11-15} 
 &  & \multirow{2}{*}{\begin{tabular}[c]{@{}c@{}}MMVP-\\ MLLM~\cite{tong2024eyes}\end{tabular}} & \multicolumn{4}{c}{NaturalBench~\cite{li2024naturalbench}$^\ddagger$} & \multicolumn{2}{c}{CV-Bench 2D~\cite{tong2024cambrian}} & \multirow{2}{*}{\begin{tabular}[c]{@{}c@{}}CV-Bench\\ 3D~\cite{tong2024cambrian}\end{tabular}} & \multicolumn{3}{c}{POPE~\cite{li2023evaluating}} & \multirow{2}{*}{\begin{tabular}[c]{@{}c@{}}SciQA-\\ IMG~\cite{lu2022learn}\end{tabular}} & \multirow{2}{*}{\begin{tabular}[c]{@{}c@{}}Hallusion\\ Avg.~\cite{guan2024hallusionbench}\end{tabular}} \\ \cmidrule(lr){4-7} \cmidrule(lr){8-9} \cmidrule(lr){11-13}
 &  &  & Acc & Q-Acc & I-Acc & G-Acc & ADE20K & COCO &  & rand & pop & adv &  &  \\ \midrule
\multirow{3}{*}{Vicuna-7B~~~} & Original & 24.7 & \underline{76.4} & \underline{53.6} & \underline{56.4} & 17.6 & 49.6 & 60.9 & 58.7 & 87.3 & 86.1 & 84.2 & \textbf{66.8} & 27.6 \\
 & DIVA$^\dagger$ & \textbf{31.3} & 75.3 & 51.7 & 56.1 & \underline{22.3} & \underline{51.3} & \underline{63.4} & \underline{60.2} & \underline{87.9} & \textbf{87.0} & \textbf{84.6} & 66.3 & \textbf{28.6} \\
 & \CC{30}Ours & \CC{30}\underline{30.7} & \CC{30}\textbf{77.3} & \CC{30}\textbf{55.6} & \CC{30}\textbf{59.1} & \CC{30}\textbf{24.4} & \CC{30}\textbf{52.9} & \CC{30}\textbf{63.6} & \CC{30}\textbf{63.2} & \CC{30}\textbf{88.1} & \CC{30}\underline{86.7} & \CC{30}\textbf{84.6} & \CC{30}\underline{66.5} & \CC{30}\underline{28.4} \\ \midrule \midrule
\multirow{3}{*}{Vicuna-13B~~~} & Original & 30.7 & \underline{76.3} & \underline{52.9} & 55.1 & 13.8 & 52.6 & 63.3 & 65.0 & 87.1 & 86.2 & 84.5 & 71.6 & 24.5 \\
 & DIVA$^\dagger$ & \underline{35.3} & 76.0 & 52.7 & \underline{56.0} & \underline{16.8} & \underline{53.2} & \textbf{64.3} & \underline{65.8} & \textbf{88.1} & \textbf{87.4} & \underline{84.8} & \underline{71.8} & \underline{25.2} \\
 & \CC{30}Ours & \CC{30}\textbf{36.7} & \CC{30}\textbf{77.2} & \CC{30}\textbf{55.3} & \CC{30}\textbf{58.7} & \CC{30}\textbf{22.9} & \CC{30}\textbf{55.3} & \CC{30}\textbf{64.3} & \CC{30}\textbf{66.4} & \CC{30}\underline{87.8} & \CC{30}\underline{87.0} & \CC{30}\textbf{84.9} & \CC{30}\textbf{72.3} & \CC{30}\textbf{26.4} \\ \bottomrule
\end{tabular}
}
\vspace{-7pt}
\end{table*}

\vspace{-12pt}
\paragraph{\emph{Plug-and-play} vision-centric enhancements for MLLMs.}
Our method \emph{independently} enhances CLIP ViT with fine-grained representations. Considering that existing MLLMs~\cite{liu2023visual,liu2024improved,bai2023qwen} predominantly use CLIP ViT as the visual encoder, we replace the original CLIP with the enhanced CLIP as a \emph{plug-and-play} module and integrate it into MLLMs to explore the impact of the enhanced visual representations on MLLMs' final performance. For fair comparisons, we adopt the same training setup as LLaVA-1.5~\cite{liu2024improved}, \ie, training data and stages, to train MLLMs. For DIVA~\cite{wang2025diffusion}, we adopt the official CLIP checkpoints. We conduct a comprehensive evaluation of the MLLMs on multiple vision-centric benchmarks, including  MMVP-MLLM~\cite{tong2024eyes}, CV-Bench~\cite{tong2024cambrian} and NaturalBench~\cite{li2024naturalbench}, as well as some general multimodal understanding benchmarks. Results in Table~\ref{tab:mllm-benchmark} show that visual enhancement of CLIP is effectively transferred to MLLMs, resulting in significant improvements across vision-centric benchmarks. For instance, compared to the original CLIP in Vicuna-7B MLLM, we achieved 6.0\% and 4.5\% improvements on MMVP-MLLM and CV-Bench 3D, respectively.

\begin{table}[!t]
\setlength\tabcolsep{2pt}
\centering
\renewcommand{\arraystretch}{1}
\caption{Performance of zero-shot classification and retrieval that require global semantics. We report the results of original and post-tuned OpenAICLIP@224.}
\vspace{-10pt}
\label{tab:zero-shot-clip}
\resizebox{.98\linewidth}{!}{
\begin{tabular}{@{}lcccccccc@{}}
\toprule
\multirow{2}{*}{Method} & \multicolumn{4}{c}{Classification} & \multicolumn{2}{c}{Retrieval-Image@5} & \multicolumn{2}{c}{Retrieval-Text@5} \\ \cmidrule(l){2-5} \cmidrule(l){6-7} \cmidrule(l){8-9} 
 & IN-1K & C100 & SUN397 & Cars & Flickr30k & COCO & Flickr30k & COCO \\ \midrule
Original & 75.5 & 76.1 & 67.5 & 77.7 & 87.2 & 61.1 & 97.4 & 79.2 \\
\RC{30}Ours & 75.6 & 76.1 & 67.5 & 77.6 & 87.3 & 61.2 & 97.2 & 79.4 \\ \bottomrule
\end{tabular}
}
\vspace{-7pt}
\end{table}

\vspace{-12pt}
\paragraph{Visual enhancements do not hurt CLIP's original global semantics.}
CLIP has inherently strong global semantics in classification-based tasks~\cite{ma2024happy,ma2025protogcd}. To explore how fine-grained enhancements affect this ability, we evaluate zero-shot classification on datasets like ImageNet-1K~\cite{deng2009imagenet}, CIFAR100~\cite{krizhevsky2009learning}, Stanford Cars~\cite{krause20133d}, and SUN397~\cite{xiao2010sun} and zero-shot cross-modal retrieval tasks on Flick30k~\cite{young2014image} and COCO~\cite{chen2015microsoft}.
Table~\ref{tab:zero-shot-clip} reveals that the performance difference is minimal ($<0.3\%$) across various settings, which means that our method could enhance CLIP's fine-grained understanding without forgetting its global semantics~\cite{tan2023datapruneinfomax,tan2023movingonesampleout}.

\begin{figure}[!t]
    \centering
    \includegraphics[width=0.75\linewidth]{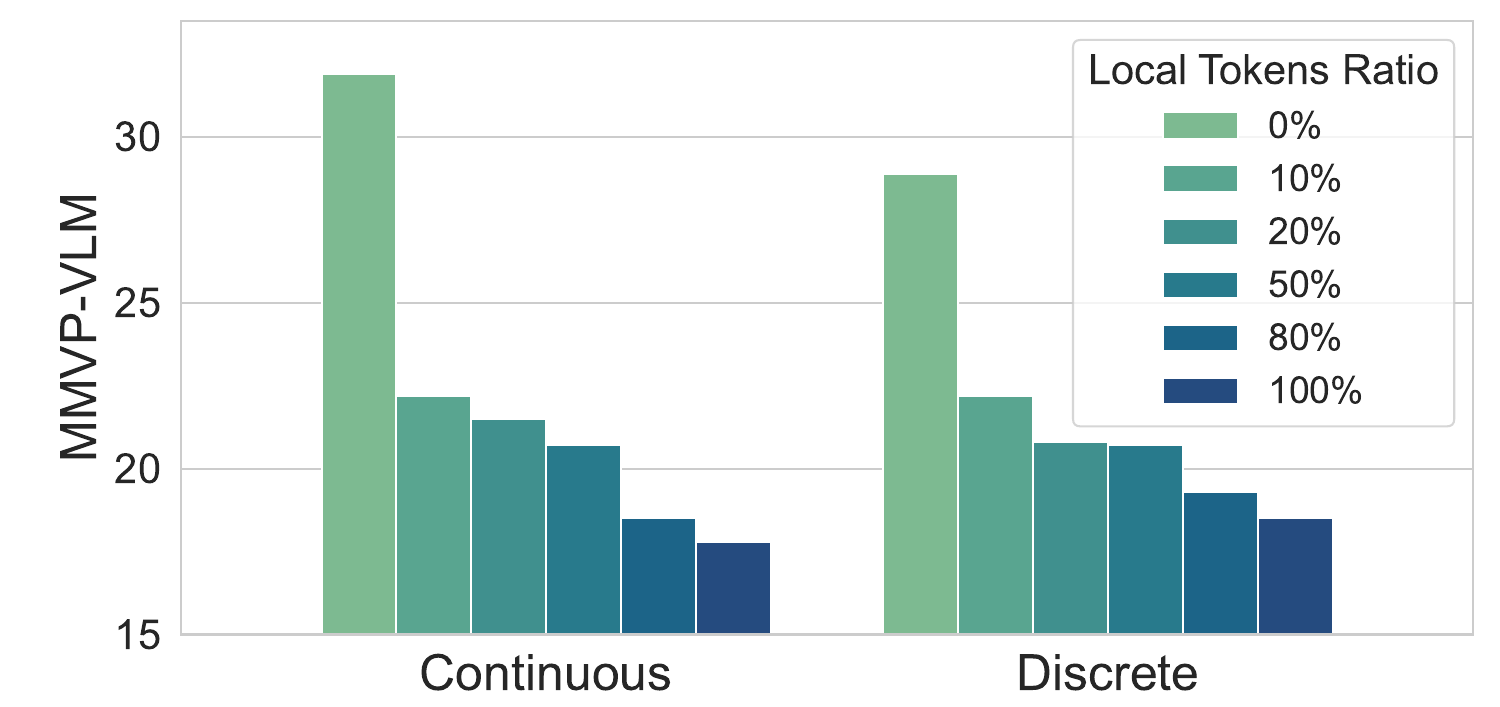}
    \vspace{-7pt}
    \caption{Performance of CLIP across various conditional visual tokens on MMVP-VLM, \ie, \texttt{[CLS]} + $n\%$ \texttt{[LOCAL]}.}
    \vspace{-7pt}
    \label{fig:ratio-local}
\end{figure}

\subsection{Key Explorations and Ablations}
\label{subsec:ablations}

\paragraph{Key Point \#1: Selecting Conditional Visual Tokens.}
As in Sec.~\ref{subsec:condition}, selecting conditional visual tokens is critical for enhancing representations. We conduct experiments by choosing the class token and different proportions of local tokens, \ie, \texttt{[CLS]} + $n\%$ \texttt{[LOCAL]}. As displayed in Fig.~\ref{fig:ratio-local}, even a very small ratio (10\%) leads to significant performance degradation, which suggests that local tokens carry substantial signals for reconstruction, making the task too easy with information leakage. Consequently, this prevents the visual encoder from effectively learning fine-grained details and brings about a limited $I(V; G_1)$.
The conclusion applies to both types of $\boldsymbol{g}_\phi$. Therefore, we propose to choose only the class token as the condition.

\vspace{-12pt}
\paragraph{Key Point \#2.1: Two-Stage Training.}
As elaborated in Sec.~\ref{subsec:train-config}, in Stage-1 of the two-stage training scheme, the projector learns to bridge the gap between the feature space of the visual encoder and the condition space of the denoiser, which serves as irrelevant information $G_2$. Ablations comparing end-to-end with the proposed two-stage training are illustrated in Fig.~\ref{fig:training-stages}. End-to-end training consistently exhibits a performance drop of over 5\% across various settings. This indicates that our two-stage training is crucial in preventing interference from $G_2$.

\begin{figure}[!t]
    \centering
    \includegraphics[width=0.75\linewidth]{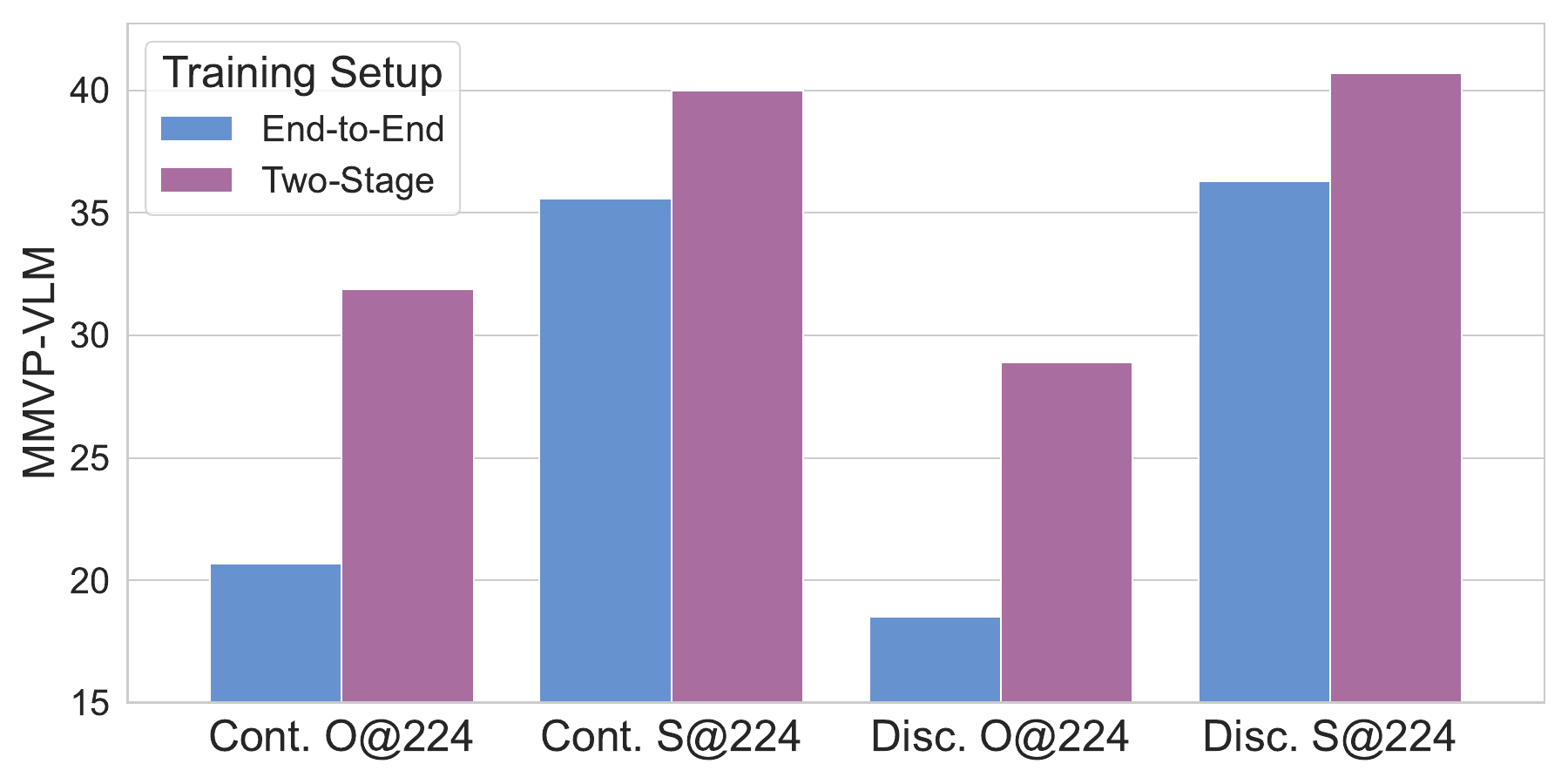}
    \vspace{-7pt}
    \caption{Comparison of CLIP with end-to-end and the proposed two-stage training on MMVP-VLM. Here, Cont. and Disc. denote continuous and discrete denoisers. O: OpenAICLIP. S: SigLIP.}
    \vspace{-7pt}
    \label{fig:training-stages}
\end{figure}

\vspace{-12pt}
\paragraph{Key Point \#2.2: Timestamp Sampling for Continuous Denoisers.}
Timestamp sampling of continuous denoisers is also pivotal for $\boldsymbol{v}_\theta$ to learn the fine-grained knowledge from $\boldsymbol{g}_\phi$, \ie, $I(V;G_1)$. We compare our proposed \emph{scaled} Logit-Normal sampling with standard uniform sampling, as shown in Table~\ref{tab:timestamp}.
Compared to uniform sampling, ours favors sampling closer to the middle ($t=0.5$), \ie, in the middle of two distributions $\boldsymbol{x}_t=t\boldsymbol{x}_1+(1-t)\boldsymbol{x}_0$, making denoising more challenging and more beneficial for enhancing $I(V;G_1)$. For example, our proposed distribution outperforms uniform sampling by 10.4\%, 7.4\% and 8.2\% on three CLIP backbones in Table~\ref{tab:timestamp}.
Additionally, when the scale $s$ is too small (\eg, $s=0.1$, sampling too around 0.5) or too large (\eg, $s=10$, sampling close to 0 or 1), the lack of diversity in $t$ can lead to suboptimal results due to the lack of diversity. In this work, we set $s=1$ by default.

\begin{table}[!t]
\setlength\tabcolsep{4pt}
\centering
\renewcommand{\arraystretch}{.7}
\caption{Comparison of timestamp sampling in continuous denoisers on MMVP-VLM. O: OpenAICLIP. M: MetaCLIP.}
\vspace{-10pt}
\label{tab:timestamp}
\resizebox{.75\linewidth}{!}{
\begin{tabular}{@{}lcccc@{}}
\toprule
Distribution & Scale & \multicolumn{1}{l}{O@224} & \multicolumn{1}{l}{O@336} & \multicolumn{1}{l}{M@224} \\ \midrule
Uniform & N/A & 21.5 & 22.2 & 23.7 \\ \midrule \midrule
\multirow{5}{*}{Logit-Normal} & 0.1 & 27.4 & 25.9 & 26.7 \\
 & 0.5 & 28.2 & 28.9 & 29.6 \\
 & 1.0 & \textbf{31.9} & \textbf{29.6} & \textbf{31.9} \\
 & 5.0 & 24.5 & 25.9 & 25.9 \\
 & 10.0 & 20.7 & 20.0 & 21.5 \\ \bottomrule
\end{tabular}
}
\vspace{-7pt}
\end{table}

\begin{table}[!t]
\setlength\tabcolsep{3pt}
\centering
\renewcommand{\arraystretch}{1.0}
\caption{Performance on SigLIP@224 across different sizes of lightweight \colorbox{color2}{continuous} and \colorbox{color3}{discrete} denoisers.}
\vspace{-10pt}
\label{tab:denoiser-size}
\resizebox{.9\linewidth}{!}{
\begin{tabular}{@{}cccccc@{}}
\toprule
\multirow{2}{*}{Continuous} & \cellcolor{color2}\#DiT Blocks (MM+Single) & \cellcolor{color2}1+2 & \cellcolor{color2}2+4 & \cellcolor{color2}3+6 & \cellcolor{color2}4+8 \\ \cmidrule(l){2-6} 
 & \cellcolor{color2}MMVP-VLM & \cellcolor{color2}41.5 & \cellcolor{color2}\textbf{42.2} & \cellcolor{color2}\textbf{42.2} & \cellcolor{color2}41.5 \\ \midrule \midrule
\multirow{2}{*}{Discrete} & \cellcolor{color3}\#Perceiver Layers & \cellcolor{color3}2 & \cellcolor{color3}4 & \cellcolor{color3}6 & \cellcolor{color3}8 \\ \cmidrule(l){2-6} 
 & \cellcolor{color3}MMVP-VLM & \cellcolor{color3}41.5 & \cellcolor{color3}43.7 & \cellcolor{color3}\textbf{45.2} & \cellcolor{color3}43.7 \\ \bottomrule
\end{tabular}
}
\vspace{-7pt}
\end{table}

\vspace{-12pt}
\paragraph{Key Point \#2.3: Sizes of lightweight denoisers.}
We further explore the impact of the size of lightweight denoisers. For the continuous RF, we consider the number of blocks in MM-DiT and Single DiT. We consider the number of layers for Perceiver. Table~\ref{tab:denoiser-size} demonstrates that the denoiser could perform remarkably well with a relatively small size, indicating the efficiency of our lightweight denoisers.

\vspace{-12pt}
\paragraph{Key Point \#3: Continuous and Discrete Denoisers.}
Table~\ref{tab:mmvp-continuous-discrete} demonstrates the performance with continuous and discrete denoisers. Both of them surpass previous work~\cite{wang2025diffusion} on various backbones. For example, the discrete denoiser obtains a 4.5\% performance gain on SigLIP@224~\cite{zhai2023sigmoid}.
In summary, our method is general and applies to both continuous and discrete models. It is efficient with lightweight denoisers but strong enough to outperform prior arts~\cite{wang2025diffusion}.
Notably, previous Key Points \#1$\sim$\#2 are consistently applicable to both continuous and discrete denoisers, further highlighting the versatility of our method.

\subsection{Further Analysis}

\paragraph{Why are improvements on SigLIP relatively small?}
In Table~\ref{tab:main-mmvp-clip}, we observe that the improvement on SigLIP is relatively smaller compared to OpenAICLIP and MetaCLIP. Specifically, the performance gain over the original SigLIP is $\sim 3.7\%$, less than that for others, \ie, $>10\%$.
Unlike the other two backbones~\cite{radford2021learning,xu2024demystifying}, SigLIP~\cite{zhai2023sigmoid} does not explicitly train a distinct class token. In practice, we extract the \texttt{pooler\_output} of SigLIP as the condition for the denoiser, which is obtained by aggregating all local tokens through attention and linear layers.
We attribute the relatively small improvement on SigLIP to the indirect leakage of local information through the \texttt{pooler\_output}, which hinders the enhancement of $I(V; G_1)$. This is consistent with the discussion in Sec.~\ref{subsec:condition} and the results in Fig.~\ref{fig:ratio-local}.

\begin{table}[!t]
\setlength\tabcolsep{4pt}
\centering
\renewcommand{\arraystretch}{1}
\caption{Performance of our method with our \colorbox{color2}{continuous} and \colorbox{color3}{discrete} denoisers on MMVP-VLM (average of all visual patterns). \textbf{Bold} and \underline{underline} indicate the best and the second best.}
\vspace{-10pt}
\label{tab:mmvp-continuous-discrete}
\resizebox{.8\linewidth}{!}{
\begin{tabular}{@{}lccc@{}}
\toprule
Method & OpenAI@224 & SigLIP@224 & SigLIP@384 \\ \midrule
DIVA & 25.9 & 40.7 & 38.5 \\
\cellcolor{color2}Continuous & \cellcolor{color2}\textbf{31.9} & \cellcolor{color2}\underline{42.2} & \cellcolor{color2}\underline{40.0} \\
\cellcolor{color3}Discrete & \cellcolor{color3}\underline{28.9} & \cellcolor{color3}\textbf{45.2} & \cellcolor{color3}\textbf{40.7} \\ \bottomrule
\end{tabular}
}
\vspace{-7pt}
\end{table}

\begin{table}[!t]
\setlength\tabcolsep{3pt}
\centering
\renewcommand{\arraystretch}{1.0}
\caption{Efficiency comparison of our lightweight RF denoiser with pre-trained \texttt{FLUX.1-dev}.}
\vspace{-10pt}
\label{tab:efficiency}
\resizebox{.95\linewidth}{!}{
\begin{tabular}{@{}lccccc@{}}
\toprule
\multirow{2}{*}{Denoiser} & \multicolumn{3}{c}{Efficiency} & \multicolumn{2}{c}{MMVP-VLM} \\ \cmidrule(l){2-4}  \cmidrule(l){5-6} 
 & \#Params & Memory & Time/100 iters & OpenAI & Meta-H \\ \midrule
Pre-trained & 11.90B & 37.33G & 198.57s & \textbf{32.6} & \textbf{37.1} \\
\RC{30}Lightweight & \textbf{1.31B} & \textbf{13.07G} & \textbf{20.55s} & 31.9 & \textbf{37.1} \\ \bottomrule
\end{tabular}
}
\vspace{-7pt}
\end{table}

\vspace{-12pt}
\paragraph{Efficiency analysis compared with pre-trained \texttt{FLUX}.}
We provide a comparison between our lightweight RF ($n=2$) and original \texttt{FLUX.1-dev}~\cite{flux2024} across the following dimensions: \#params of denoisers, per-device GPU memory and training time of 100 iterations. To ensure fair comparisons, we fix a per-device batch size of 2. As Table~\ref{tab:efficiency} shows, our lightweight denoiser is much more efficient than the pre-trained heavy one. Specifically, our lightweight denoiser has approximately 1/10 of the parameters, occupies about 1/3 of the memory, and is 10 times faster in training, while the final performance remains comparable.

\section{Conclusive Remarks}
\label{sec:conclusive-remarks}

In this paper, we delve into the underlying principles of how generative models enhance visual representations. We innovatively uncover that the perfect generation does not always yield optimal representations. The pivot is to learn useful knowledge from the generative model while mitigating irrelevant information.
Our key findings lie in three aspects. (1) Conditioning mechanism. We found that local tokens could make the reconstruction task too easy, while class token \emph{alone} as the condition makes the reconstruction task meaningful and significantly enhances visual representations. (2) Denoising configurations. We propose a novel two-stage post-training method to enable vision encoders committed to learning fine-grained knowledge while alleviating irrelevant content. (3) Our model design enables both continuous and discrete denoisers to effectively enhance visual representations.
Vision-centric evaluations demonstrate that our method with lightweight denoisers can significantly outperform previous methods relying on heavy pre-trained generative models.
We hope this work will inspire further in-depth explorations into the synergy between generative and discriminative models, as well as the relationship between generation and understanding tasks.

{
    \small
    \bibliographystyle{ieeenat_fullname}
    \bibliography{main}
}

\appendix
\maketitlesupplementary


\section*{Overview}

In this appendix, we provide additional descriptions of the following contents:
\begin{itemize}
    \item Relationship with prior works in Appendix~\ref{sec:appendix-relation}, including some discussions about the differences.
    \item More training details of hyperparameters in Appendix~\ref{sec:appendix-training-details}.
    \item Diagrams of various timestamp sampling distributions in Appendix~\ref{sec:appendix-timestamp}.
    \item Additional experimental results in Appendix~\ref{sec:appendix-experiments}.
    \item Additional qualitative results and cases of the enhanced CLIP (Appendix~\ref{sec:appendix-qualitative-clip}) and MLLMs with our enhanced CLIP (Appendix~\ref{sec:appendix-qualitative-mllm}).
    \item We also attach algorithms of our two-stage training with continuous and dicrete denoisers in Appendix~\ref{sec:appendix-algorithms}.
\end{itemize}

\section{Relationship with Prior Works}
\label{sec:appendix-relation}

In this paper, we propose a two-stage post-training method to enhance discriminative models' fine-grained visual representations. For discriminative models, we primarily choose CLIP~\cite{radford2021learning}, considering its wide range of applications. Specifically, CLIP is inherently a vision-language model, capable of image-text retrieval and matching. Additionally, CLIP ViT is widely employed as a visual encoder in Multimodal Large Language Models (MLLMs). Note that our approach follows a post-training paradigm, where we enhance the fine-grained capabilities of a pre-trained CLIP ViT, while preserving its original global semantics.

\vspace{-12pt}
\paragraph{Comparison with DIVA~\cite{wang2025diffusion}.}
DIVA is a pioneering work and proposes to enhance visual representations of CLIP ViT through diffusion feedback. It independently enhances CLIP ViT's visual representations with the guidance of pre-trained stable diffusion~\cite{rombach2022high}. Similar to DIVA, our work focuses on enhancing CLIP ViT's internal visual representations. The enhanced CLIP itself could be a more competent vision-language model with better image-text retrieval performance. Furthermore, the enhanced CLIP ViT serves as a \emph{plug-and-play} module and could be seamlessly plugged into MLLMs. When using the same training recipes but with the enhanced vision encoder, MLLMs could be more capable on several vision-centric benchmarks, with better fine-grained perception of visual details and overcoming visual shortcomings brought about by the original CLIP.

Different from DIVA, we delve into the underlying principles of how generative models enhance vision models from various orthogonal dimensions. Notably, we only employ lightweight denoisers without pre-trained weights of heavy generative models. Our method is efficient yet stronger than DIVA. We also provide several key insights about how to enhance visual representations, \ie, conditioning mechanisms and training configurations. We further explore the implementation of both continuous and discrete generative models. When equipped with corresponding tailor-made designs, both continuous and discrete denoisers outperform DIVA.

\vspace{-12pt}
\paragraph{Comparison with ROSS~\cite{wang2025reconstructive}.}
Ross is a pioneering work that explores the intrinsic signals in vision modality and proposes to append vision-centric self-supervision into the training of MLLMs. The core difference between ROSS and our method is that, ROSS is directly oriented to training better MLLMs. In most cases, ROSS freezes CLIP ViT and enhances the vision-centric performance of MLLMs through the parameters of LLMs. In contrast, our method is directly oriented to enhance CLIP ViT's visual representations. Our method is more general, and the resulting enhanced CLIP could be plugged into various MLLMs. In summary, we independently enhance CLIP ViT, which could be merged into MLLMs for further enhancements, while ROSS directly enhances MLLMs with the ViT frozen.

\section{More Training Details}
\label{sec:appendix-training-details}

\vspace{-5pt}
\paragraph{Default training settings.}
Our training process consists of two stages, each involving one epoch on the CC3M~\cite{sharma2018conceptual} dataset. We choose AdamW as the optimizer, with a learning rate of 1e-4 and 1e-5 for Stage-1 and Stage-2, respectively. At Stage-2, we optimize the visual encoder using LoRA with a rank of 16. We train the model on 8 GPUs with a per-device batch size of 16, and the gradient accumulation steps are set as 2, resulting in a global batch size of 256. We plug LoRA to CLIP ViT, with a rank of 16, and an $\alpha$ of 16. Additionally, we employ dropout with a ratio of 0.1 within LoRA.

\vspace{-10pt}
\paragraph{Detailed settings in Fig.~1 of main manuscript.}
The default settings are: a lightweight denoiser with 2 MM-DiT and 4 Single-DiT blocks, using only the \texttt{[CLS]} as the condition, and two-stage training with 100,000 steps in stage-1 and 5,000 steps in stage-2. Each of the four aspects in Fig.~1(b) modifies \emph{only} one parameter or dimension at a time. Specifically, (i) changes \#iters in stage-2 from 100 to 10,000. (ii) varies the number of denoiser-blocks ($n\times$MM-DiT+$2n\times$Single-DiT) from $n=1$ to $n=3$. (iii) conditions denoisers with \texttt{[CLS]} along with $n\%$ of local tokens. (iv) compares the lightweight denoiser with $n=4$ and the pretrained heavy $\texttt{FLUX}$ with $n=19$.

\begin{figure*}[!t]
    \centering
    \includegraphics[width=.7\linewidth]{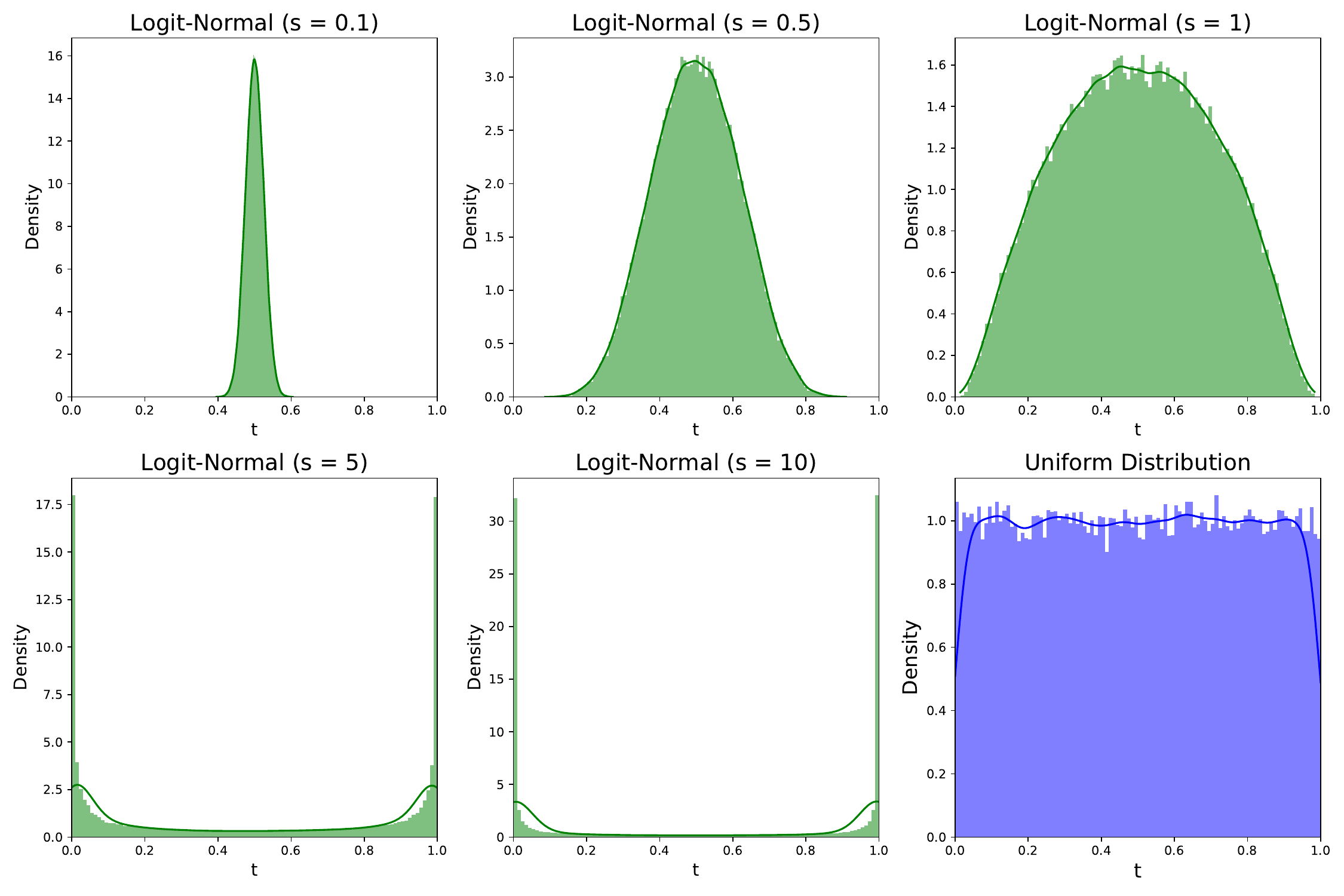}
    \vspace{-7pt}
    \caption{Probability density function of different distributions.}
    \vspace{-7pt}
    \label{fig:appendix-t-distribution}
\end{figure*}

\vspace{-10pt}
\paragraph{Detailed settings in Table~7 of main manuscript.}
The pretrained $\texttt{FLUX}$ is also trained under the same setting, \ie, two-stage training and \emph{only} the \texttt{[CLS]} serves as the condition. Without these proposed keypoints, pretrained denoiser also fails to gain desirable results, \eg, 32.9$\to$22.2 on MMVP, further indicating the generality of our method.

\section{Diagrams of Timestamp Sampling}
\label{sec:appendix-timestamp}

The \emph{scaled} Logit-Normal timestamp sampling is as follows:
\begin{equation}
    t=\texttt{sigmoid}(s\cdot\varepsilon),\quad \text{where}\ \ \varepsilon\sim\mathcal{N}(0,1).
    \label{eq:appendix-t-sampling}
\end{equation}
We provide some illustrative diagrams to show the distribution of several candidate distributions, as shown in Fig.~\ref{fig:appendix-t-distribution}. In our \emph{scaled} Logit-Normal sampling, as $s$ decreases, the distribution becomes more focused on sampling around the middle ($t=0.5$). Conversely, as $s$ increases, the distribution becomes more biased towards sampling at the extremes, \ie, $t=0$ or 1.

\section{More Experimental Results}
\label{sec:appendix-experiments}

\begin{figure}[!t]
    \centering
    \includegraphics[width=.9\linewidth]{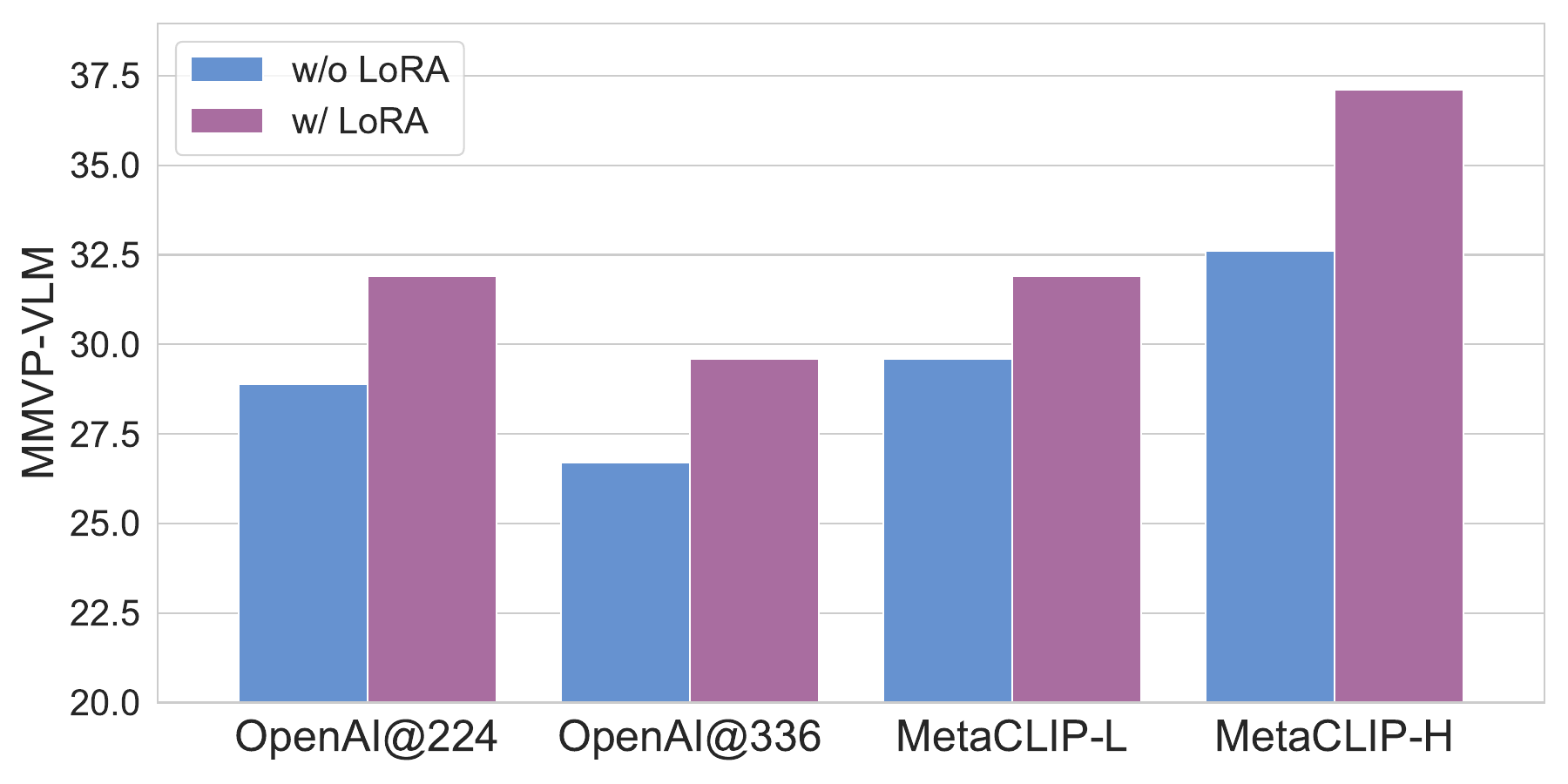}
    \vspace{-7pt}
    \caption{The effect of LoRA on several CLIP backbones.}
    \vspace{-7pt}
    \label{fig:appendix-lora}
\end{figure}

\paragraph{The effect of LoRA.}
In Stage-2, we apply LoRA to the visual model. The reason is that direct training on the visual model causes rapid updates, which can easily damage the model’s high-level semantics and lead to overfitting. By using LoRA, the model can be trained on a larger variety of samples, allowing it to learn more generalizable and fine-grained representations. We conduct experiments on several CLIP backbones, and compare the performance with direct training and LoRA training, as shown in Fig.~\ref{fig:appendix-lora}. The performance with LoRA for visual encoder consistently outperforms the cases of direct training.

\vspace{-12pt}
\paragraph{Whether to update the denoiser and projector in Stage-2.}
In the main text, we argue that in Stage-1, the visual encoder should be fixed, and we train the denoiser and projector. In this way, the projector could learn to bridge the gap between the feature spaces, which serves as the irrelevant information $G_2$ for visual enhancements. In Stage-2, we begin to train CLIP ViT to enhance its visual representations. We empirically found that whether the denoiser and projector are updated in Stage-2 has marginal impacts on the final results, as long as stage-1 training is sufficient. The results are shown in Fig.~\ref{fig:appendix-tune-only-all}.

\begin{figure*}[!t]
    \centering
    \includegraphics[width=.95\linewidth]{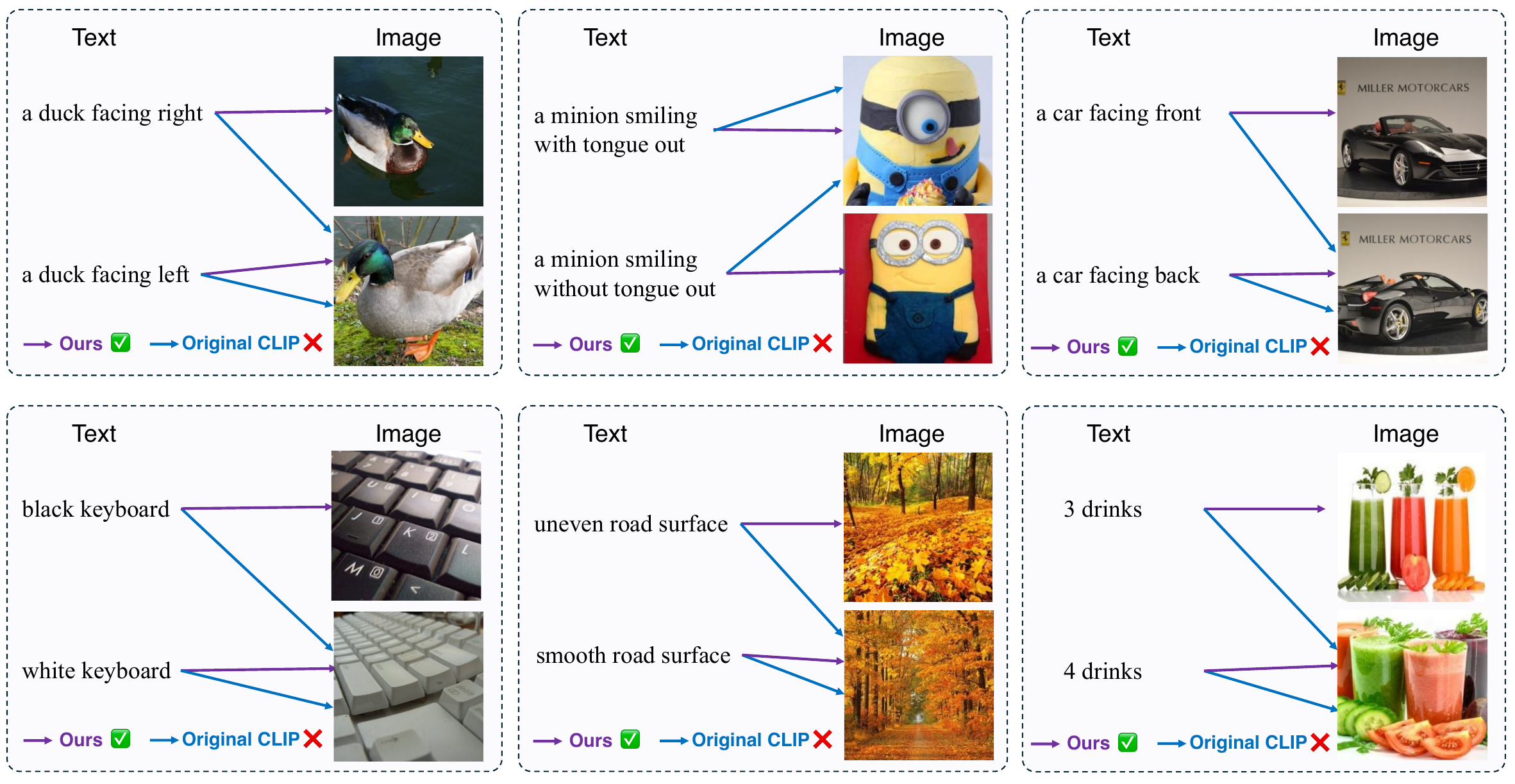}
    \vspace{-7pt}
    \caption{Qualitative results of CLIP. The enhanced CLIP overcomes the original visual shortcomings in fine-grained details.}
    \vspace{-7pt}
    \label{fig:appendix-qualitative-clip}
\end{figure*}

\begin{figure}[!t]
    \centering
    \includegraphics[width=.85\linewidth]{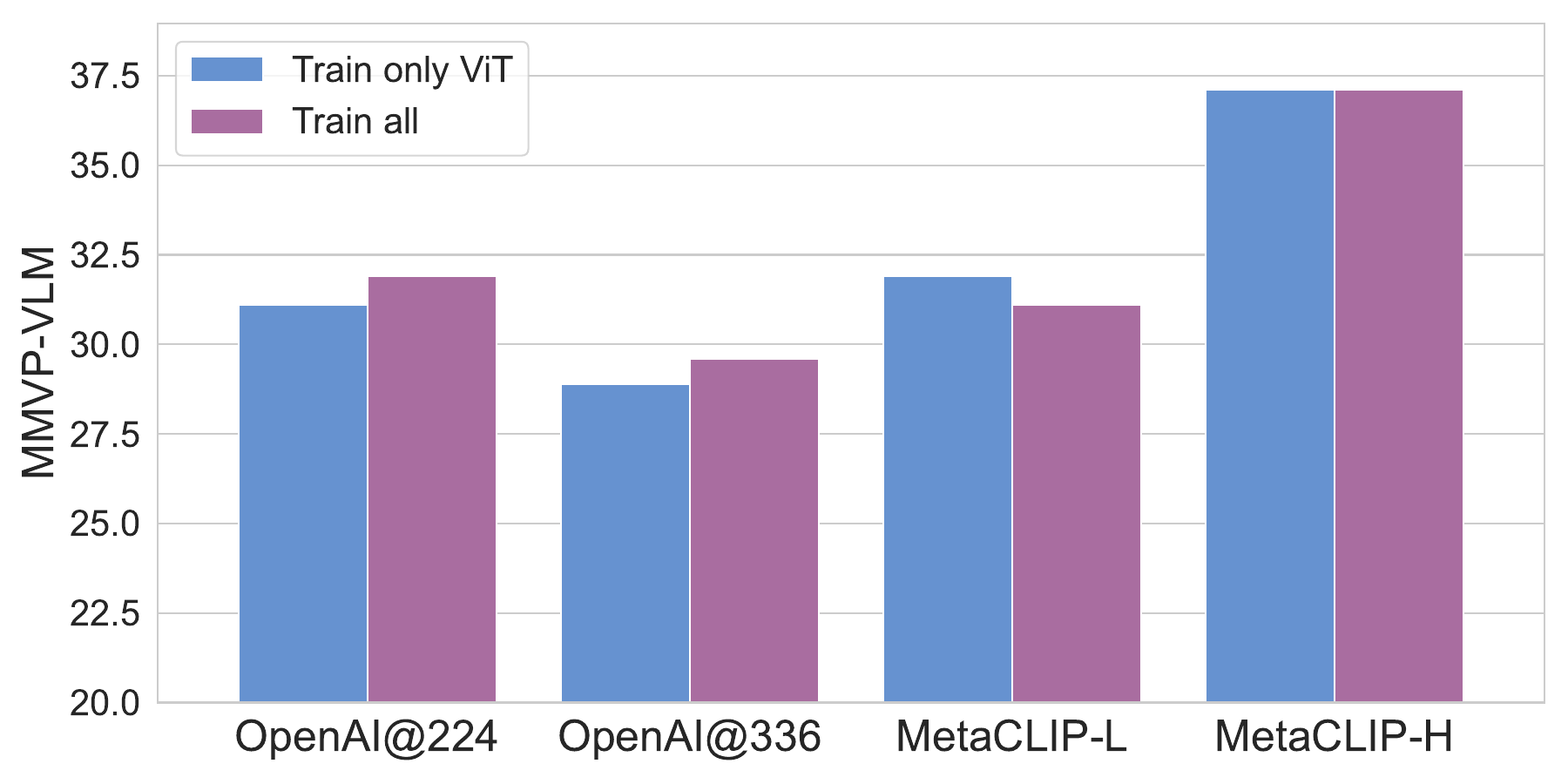}
    \vspace{-7pt}
    \caption{The performance of whether to update the denoiser and the projector in Stage-2.}
    \vspace{-7pt}
    \label{fig:appendix-tune-only-all}
\end{figure}

\begin{table}[!t]
\setlength\tabcolsep{3pt}
\centering
\renewcommand{\arraystretch}{1}
\caption{Performance of various mask ratios on OpenAICLIP@224.}
\vspace{-7pt}
\label{tab:appendix-mask-ratio}
\resizebox{.95\linewidth}{!}{
\begin{tabular}{@{}lcccccccc@{}}
\toprule
Mask Ratio (\%) & 50 & 60 & 70 & 75 & 80 & 85 & 90 & random (50-90) \\ \midrule
MMVP-VLM & 28.1 & 27.4 & \textbf{28.9} & 27.4 & 26.7 & 25.9 & 25.9 & \textbf{28.9} \\ \bottomrule
\end{tabular}
}
\vspace{-7pt}
\end{table}

\vspace{-12pt}
\paragraph{Performance with various mask ratios.}
In the discrete denoiser, we apply masking mechanisms. Here, we provide experimental results across various mask ratios of OpenAICLIP@224, as shown in Table~\ref{tab:appendix-mask-ratio}.

\section{Qualitative Results of CLIPs}
\label{sec:appendix-qualitative-clip}

We provide further qualitative results of the original CLIP and our enhanced CLIP, as shown in Fig.~\ref{fig:appendix-qualitative-clip}. The enhanced CLIP overcomes original visual shortcomings in fine-grained details, including color, quantity, structural characteristics and state.

\begin{figure*}[!t]
    \centering
    \includegraphics[width=0.95\linewidth]{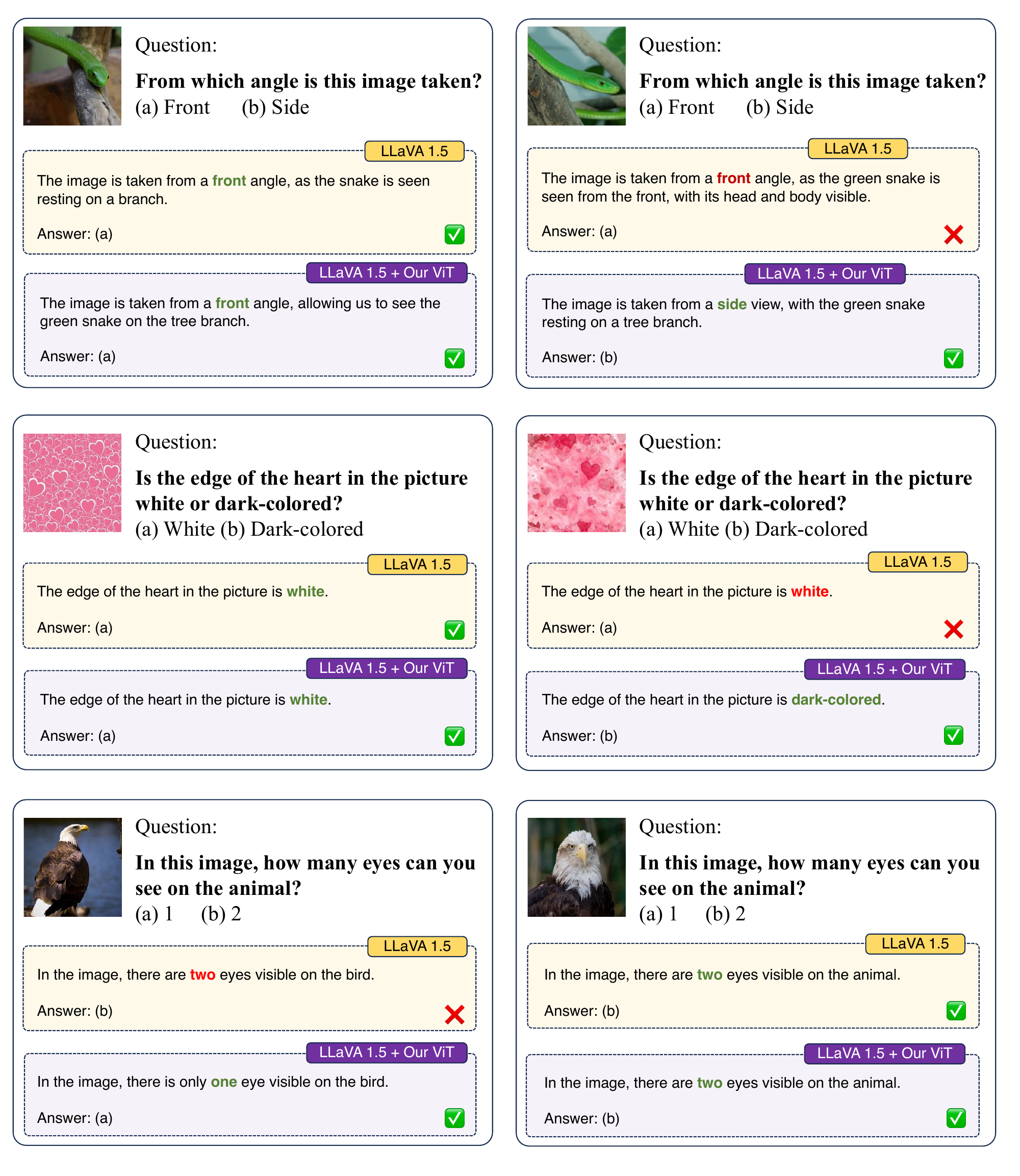}
    \caption{Qualitative results of MLLMs on MMVP-MLLM benchmark. When equipped with our enhanced CLIP, MLLMs produce better vision-centric performance.}
    \label{fig:appendix-qualitative-mllm}
\end{figure*}

\section{Qualitative Results of MLLMs}
\label{sec:appendix-qualitative-mllm}

We provide qualitative results of LLaVA-1.5 with original CLIP ViT and our enhanced CLIP ViT, as shown in Fig~\ref{fig:appendix-qualitative-mllm}. Our enhanced visual model could further boost MLLMs' fine-grained visual perception abilities.

\section{Algorithms}
\label{sec:appendix-algorithms}

For a clearer and more thorough understanding of our method, we attach the algorithm details of two-stage post-training with continuous and discrete denoisers in Algorithm~\ref{alg:continuous-denoiser} and Algorithm~\ref{alg:discrete-denoiser}, respectively.

\begin{algorithm*}[p]
    \small
    \caption{Two-stage Visual Enhancements with Continuous Lightweight Denoiser}
    \label{alg:continuous-denoiser}
    \begin{algorithmic}[1]
        \Require {Lightweight and random-initialized denoiser $\boldsymbol{g}_\phi(\cdot)$, with lightweight \texttt{FLUX}-like architecture (MM-DiT + Single-DiT).}
        \Require {Pre-trained CLIP ViT $\boldsymbol{v}_\theta(\cdot)$ for fine-grained visual representation enhancements.}
        \Require {Random initialized projector $\boldsymbol{h}_\omega(\cdot)$ to bridge the feature space of $\boldsymbol{v}_\theta$ and condition space of $\boldsymbol{g}_\phi$.}
        \Require {The scale hyperparameter $s$ in the proposed \emph{scaled} Logit-Normal sampling.}
        \Require {Pre-trained VAE $\texttt{vae}(\cdot)$ to provide latent space for generative modeling.}
        \Require {Image-only training dataset $\mathcal{D}$ without annotations.}
        \State {\textcolor{iccvblue}{\texttt{\# =================================== Stage-1 ==================================}}}
        \For {$\boldsymbol{x}$ in $\mathcal{D}$}
            \State $\triangleright$ Prepare input data for generative modeling in latent space: \quad $\widetilde{\boldsymbol{x}_1}=\texttt{vae}(\boldsymbol{x})$ and $\widetilde{\boldsymbol{x}_0}\sim \mathcal{N}(\boldsymbol{0},\boldsymbol{I})$.
            \State $\triangleright$ Interpolating in the feature space: \quad
            $\widetilde{\boldsymbol{x}_t}=t\widetilde{\boldsymbol{x}_1}+(1-t)\widetilde{\boldsymbol{x}_0}$.
            \State $\triangleright$ Visual encoding as conditions for denoisers: \quad $\boldsymbol{h}_\omega\circ\boldsymbol{v}_\theta(\boldsymbol{x})$.
            \State $\triangleright$ Timestamp sampling via \emph{scaled} Logit-Normal distributions: \quad
            $\varepsilon\sim\mathcal{N}(0,1)$ then $t=\texttt{sigmoid}(s\cdot\varepsilon)$.
            \State $\triangleright$ Denoising regression objective (flow matching): \quad\quad\texttt{\textcolor{red}{\# only update $\boldsymbol{g}_\phi$ and $\boldsymbol{h}_\omega$.}} 
            \begin{equation*}
                \arg\min_{\phi,\omega}\mathbb{E}_{t,\boldsymbol{x},\widetilde{\boldsymbol{x}}_0,\widetilde{\boldsymbol{x}}_1}\big\Vert (\widetilde{\boldsymbol{x}_1}-\widetilde{\boldsymbol{x}_0})-\boldsymbol{g}_\phi\big(\widetilde{\boldsymbol{x}_t},t,\boldsymbol{h}_\omega\circ\boldsymbol{v}_\theta(\boldsymbol{x})\big) \big\Vert_2^2.
            \end{equation*}
        \EndFor
        \vspace{5pt}
        \State {\textcolor{iccvblue}{\texttt{\# =================================== Stage-2 ==================================}}}
        \State Plug LoRA upon $\boldsymbol{v}_\theta$.
        \For {$\boldsymbol{x}$ in $\mathcal{D}$}
            \State $\triangleright$ Prepare input data for generative modeling in latent space: \quad $\widetilde{\boldsymbol{x}_1}=\texttt{vae}(\boldsymbol{x})$ and $\widetilde{\boldsymbol{x}_0}\sim \mathcal{N}(\boldsymbol{0},\boldsymbol{I})$.
            \State $\triangleright$ Interpolating in the feature space: \quad
            $\widetilde{\boldsymbol{x}_t}=t\widetilde{\boldsymbol{x}_1}+(1-t)\widetilde{\boldsymbol{x}_0}$.
            \State $\triangleright$ Visual encoding as conditions for denoisers: \quad $\boldsymbol{h}_\omega\circ\boldsymbol{v}_\theta(\boldsymbol{x})$.
            \State $\triangleright$ Timestamp sampling via \emph{scaled} Logit-Normal distributions: \quad
            $\varepsilon\sim\mathcal{N}(0,1)$ then $t=\texttt{sigmoid}(s\cdot\varepsilon)$.
            \State $\triangleright$ Denoising regression objective (flow matching): \quad\quad\texttt{\textcolor{red}{\# update $\boldsymbol{v}_\theta$. Optional: $\boldsymbol{g}_\phi$ and $\boldsymbol{h}_\omega$.}}
            \begin{equation*}
                \arg\min_{\theta}\mathbb{E}_{t,\boldsymbol{x},\widetilde{\boldsymbol{x}}_0,\widetilde{\boldsymbol{x}}_1}\big\Vert (\widetilde{\boldsymbol{x}_1}-\widetilde{\boldsymbol{x}_0})-\boldsymbol{g}_\phi\big(\widetilde{\boldsymbol{x}_t},t,\boldsymbol{h}_\omega\circ\boldsymbol{v}_\theta(\boldsymbol{x})\big) \big\Vert_2^2.
            \end{equation*}
        \EndFor
        \Ensure {The enhanced visual model $\boldsymbol{v}_\theta^\star$ with stronger fine-grained representations.}
    \end{algorithmic}
\end{algorithm*}

\begin{algorithm*}[p]
    \small
    \caption{Two-stage Visual Enhancements with Discrete Lightweight Denoiser}
    \label{alg:discrete-denoiser}
    \begin{algorithmic}[1]
        \Require {Lightweight and random-initialized denoiser $\boldsymbol{g}_\phi(\cdot)$, instantiated with a lightweight Perceiver.}
        \Require {Pre-trained CLIP ViT $\boldsymbol{v}_\theta(\cdot)$ for fine-grained visual representation enhancements.}
        \Require {Random initialized projector $\boldsymbol{h}_\omega(\cdot)$ to bridge the feature space of $\boldsymbol{v}_\theta$ and condition space of $\boldsymbol{g}_\phi$.}
        \Require {Mask ratio $r$ for discrete modeling.}
        \Require {Pre-trained VQ-GAN $\texttt{vq-gan}(\cdot)$ to discrete indices for generative modeling.}
        \Require {Image-only training dataset $\mathcal{D}$ without annotations.}
        \State {\textcolor{iccvblue}{\texttt{\# =================================== Stage-1 ==================================}}}
        \For {$\boldsymbol{x}$ in $\mathcal{D}$}
             \State $\triangleright$ Obtain latent embeddings and corresponding discrete indices of input data in VQ-GAN's codebook: \quad
             $\widetilde{\boldsymbol{x}}, s=\texttt{vq-gan}(\boldsymbol{x})$.
            \State $\triangleright$ Masking $\boldsymbol{x}$'s tokens with ratio $r$ to obtain masked part $\widetilde{\boldsymbol{x}}_{mask}, s_{mask}$ and unmasked part $\widetilde{\boldsymbol{x}}_{unmask}, s_{unmask}$.
            \State $\triangleright$ Visual encoding and obtain conditions via cross-attention for denoisers:
            \begin{align*}
                Q&=\widetilde{\boldsymbol{x}}_{unmask}, \\
                K,V&=\texttt{concat}\big(\widetilde{\boldsymbol{x}}_{unmask};\boldsymbol{h}_\omega\circ\boldsymbol{v}_\theta(\boldsymbol{x})\big), \\
                \boldsymbol{c}_{\omega,\theta}&=\texttt{cross-attn}(Q,K,V).
            \end{align*}
            \State $\triangleright$ Denoising cross-entropy objective (masked index prediction): \quad\quad\texttt{\textcolor{red}{\# only update $\boldsymbol{g}_\phi$ and $\boldsymbol{h}_\omega$.}} 
            \begin{equation*}
                \arg\min_{\phi,\omega}\mathbb{E}_{\boldsymbol{x}}-\log \prod_{i=1}^L \boldsymbol{g}_\phi\big(s_{mask}|s_{unmask},\boldsymbol{c}_{\omega,\theta}\big).
            \end{equation*}
        \EndFor
        \vspace{5pt}
        \State {\textcolor{iccvblue}{\texttt{\# =================================== Stage-2 ==================================}}}
        \State Plug LoRA upon $\boldsymbol{v}_\theta$.
        \For {$\boldsymbol{x}$ in $\mathcal{D}$}
            \State $\triangleright$ Obtain latent embeddings and corresponding discrete indices of input data in VQ-GAN's codebook: \quad
             $\widetilde{\boldsymbol{x}}, s=\texttt{vq-gan}(\boldsymbol{x})$.
            \State $\triangleright$ Masking $\boldsymbol{x}$'s tokens with ratio $r$ to obtain masked part $\widetilde{\boldsymbol{x}}_{mask}, s_{mask}$ and unmasked part $\widetilde{\boldsymbol{x}}_{unmask}, s_{unmask}$.
            \State $\triangleright$ Visual encoding and obtain conditions via cross-attention for denoisers:
            \begin{align*}
                Q&=\widetilde{\boldsymbol{x}}_{unmask}, \\
                K,V&=\texttt{concat}\big(\widetilde{\boldsymbol{x}}_{unmask};\boldsymbol{h}_\omega\circ\boldsymbol{v}_\theta(\boldsymbol{x})\big), \\
                \boldsymbol{c}_{\omega,\theta}&=\texttt{cross-attn}(Q,K,V).
            \end{align*}
            \State $\triangleright$ Denoising cross-entropy objective (masked index prediction): \quad\quad\texttt{\textcolor{red}{\# update $\boldsymbol{v}_\theta$. Optional: $\boldsymbol{g}_\phi$ and $\boldsymbol{h}_\omega$.}} 
            \begin{equation*}
                \arg\min_{\theta}\mathbb{E}_{\boldsymbol{x}}-\log \prod_{i=1}^L \boldsymbol{g}_\phi\big(s_{mask}|s_{unmask},\boldsymbol{c}_{\omega,\theta}\big).
            \end{equation*}
        \EndFor
        \Ensure {The enhanced visual model $\boldsymbol{v}_\theta^\star$ with stronger fine-grained representations.}
    \end{algorithmic}
\end{algorithm*}

\end{document}